\documentclass{article}

\usepackage{microtype}
\usepackage{graphicx}
\usepackage{subcaption}
\usepackage{booktabs} 

\usepackage{hyperref}

\usepackage[preprint]{icml2026}

\newcommand{\ourmethod}{\text{POKE-SMC}\xspace}
\newcommand{\pave}{\text{POKE}\xspace}

\usepackage{amsmath,amsfonts,bm}

\def\eqref#1{equation~\ref{#1}}

\def\1{\bm{1}}

\DeclareMathAlphabet{\mathsfit}{\encodingdefault}{\sfdefault}{m}{sl}
\SetMathAlphabet{\mathsfit}{bold}{\encodingdefault}{\sfdefault}{bx}{n}

\usepackage{notation}

\usepackage{amsmath}
\usepackage{amssymb}
\usepackage{mathtools}
\usepackage{amsthm}
\usepackage{xspace}

\usepackage[capitalize,noabbrev]{cleveref}

\usepackage[table,xcdraw]{xcolor}

\theoremstyle{plain}
\newtheorem{theorem}{Theorem}[section]
\newtheorem{proposition}[theorem]{Proposition}
\newtheorem{lemma}[theorem]{Lemma}

\theoremstyle{definition}
\newtheorem{definition}[theorem]{Definition}

\theoremstyle{remark}

\usepackage[textsize=tiny]{todonotes}

\icmltitlerunning{Lookahead Path Likelihood Optimization for Diffusion LLMs}

\begin{document}

\author{%
Xuejie Liu$^{1,3}$, Anji Liu$^{2}$ , Guy Van den Broeck$^{2}$, Yitao Liang$^{1}\thanks{Corresponding author}$\\ 
$^1$Institute for Artificial Intelligence, Peking University \\$^2$Computer Science Department, University of California, Los Angeles\\$^3$School of Intelligence Science and Technology, Peking University\\
\texttt{xjliu@stu.pku.edu.cn},\quad\texttt{liuanji@cs.ucla.edu}\\ \texttt{guyvdb@cs.ucla.edu},\quad\texttt{yitaol@pku.edu.cn}
}

\twocolumn[
  \icmltitle{Lookahead Path Likelihood Optimization for Diffusion LLMs}




  \icmlsetsymbol{equal}{*}

  \begin{icmlauthorlist}
    \icmlauthor{Xuejie Liu}{iai,pku}
    \icmlauthor{Yap Vit Chun}{nus}
    \icmlauthor{Yitao Liang}{iai}
    \icmlauthor{Anji Liu}{nus}
  \end{icmlauthorlist}

  \icmlaffiliation{iai}{Institute for Artificial Intelligence, Peking University}
  \icmlaffiliation{nus}{School of Computing, National University of Singapore
}
  \icmlaffiliation{pku}{School of Intelligence Science and Technology, Peking University}

  \icmlcorrespondingauthor{Anji Liu}{anjiliu@nus.edu.sg}

  \icmlkeywords{Machine Learning, ICML}

  \vskip 0.3in
]



\printAffiliationsAndNotice{}  

\begin{abstract}

Diffusion Large Language Models (dLLMs) support arbitrary-order generation, yet their inference performance critically depends on the unmasking order. Existing strategies rely on heuristics that greedily optimize local confidence, offering limited guidance for identifying unmasking paths that are globally consistent and accurate. To bridge this gap, we introduce \emph{path log-likelihood} (Path LL), a trajectory-conditioned objective that strongly correlates with downstream accuracy and enables principled selection of unmasking paths. To optimize Path LL at inference time, we propose \textbf{POKE}, an efficient value estimator that predicts the expected future Path LL of a partial decoding trajectory. We then integrate this lookahead signal into \textbf{POKE-SMC}, a Sequential Monte Carlo-based search framework for dynamically identifying optimal unmasking paths. Extensive experiments across 6 reasoning tasks show that POKE-SMC consistently improves accuracy, achieving \textbf{2\%--3\%} average gains over strong decoding-time scaling baselines at comparable inference overhead on LLaDA models and advancing the accuracy--compute Pareto frontier.

\end{abstract}

\vspace{-1em}
\section{Introduction}
\label{sec:intro}

Recent advancements in Diffusion Large Language Models (dLLMs)~\citep{nie2025llada,bie2025llada2,ye2025dream,gong2025diffucoder} have introduced a compelling alternative to the dominant Autoregressive (AR) paradigm \citep{brown2020language, achiam2023gpt}. By framing text generation as a discrete denoising process, dLLMs break the strict left-to-right constraint of AR models, offering two primary promises: high-throughput parallel decoding and the flexibility of arbitrary-order generation. While the efficiency gains of parallel decoding have been well established~\citep{song2508seed,wu2025fast2,labs2025mercury,wang2025d2f,li2025adaptive}, the potential of arbitrary-order generation remains comparatively underexplored.

Although dLLMs theoretically support any unmasking order, inference performance is critically sensitive to the specific decoding path \citep{kim2025train,ye2024beyond}. Existing unmasking strategies largely rely on heuristic rules~\citep{ye2025dream,nie2025llada}. However, as shown in \cref{fig:teaser} (Left), widely used heuristics perform inconsistently across different task domains, suggesting that optimal decoding orders are highly context-dependent and task-specific. This motivates a shift from static, one-size-fits-all heuristics toward \emph{dynamic} unmasking strategies that can adaptively identify promising orders for each instance.


In this work, we bridge this gap by establishing the \textbf{Path Log-Likelihood (Path LL)}, the joint likelihood of a sequence specified by its unmasking order, as a reliable proxy for generation quality. Unlike metrics such as ELBO \citep{sahoo2024simple} or marginal path entropy \citep{lee2025lookahead, chen2025optimizing}, we empirically demonstrate in \cref{fig:teaser} (Right) that Path LL exhibits a significantly stronger correlation with downstream task accuracy, making it an ideal objective for inference-time optimization.

Building on this observation, our goal is to \emph{optimize the unmasking order to maximize Path LL} at inference time. 
The central challenge is that the space of possible orders is combinatorial, and evaluating the global potential of a partial sequence requires look-ahead, as early unmasking decisions can have long-range effects on the final Path LL. To address this, we propose \textbf{POKE} (\textbf{P}ath-\textbf{O}ptimistic \textbf{K}-step likelihood \textbf{E}stimator), an efficient value estimator that upper-bounds the expected future Path LL of a partial decoding state. Crucially, POKE is designed to remain accurate even under aggressive parallel rollouts, providing a calibrated, scalable guidance for trajectory-level search.

We further integrate POKE into the Sequential Monte Carlo framework~\citep{doucet2001introduction} and develop \textbf{POKE-SMC}, which enables value-guided search to prioritize high-quality unmasking paths dynamically. Empirically, we find that POKE-SMC achieves consistent accuracy improvements across reasoning benchmarks with favorable inference-time scaling behavior: under matched inference budgets, our method achieves average accuracy gains of \textbf{3\%--5\%} on representative LLaDA models~\citep{nie2025llada}, while prior decoding-time scaling methods typically yield only \textbf{1\%--2\%} gains, and we consistently attains a superior accuracy--compute Pareto frontier compared to prior decoding-time scaling methods across varying inference budgets.~\citep{wang2025remasking,chen2025optimizing,lee2025lookahead}. \textbf{Our contributions are summarized as follows:}
\vspace{-0.5em}
\begin{itemize}
    \item We establish \textbf{Path LL} as a superior proxy for generation quality in dLLMs, demonstrating its strong correlation with task accuracy.
    \vspace{-0.4em}
    \item We propose \textbf{POKE}, a value estimator that efficiently performs lookahead evaluation to approximate the future Path LL.

    \vspace{-0.4em}
    \item We introduce \textbf{POKE-SMC}, a value-guided search framework that adaptively optimizes unmasking orders. Across multiple tasks, our method consistently improves the accuracy--compute Pareto frontier.
\end{itemize}

\section{Background}
\label{sec:background}

Diffusion Large Language Models (dLLMs)~\cite{labs2025mercury,bie2025llada2} frame text generation as the reverse of a stochastic corruption process. Unlike autoregressive models, dLLMs support non-autoregressive and parallel decoding by learning to recover original tokens from masked sequences \cite{austin2021structured, sahoo2024simple}.

\paragraph{Forward Process}
Given a clean sequence of $N$ categorical variables $\x^0 = (x_1, \ldots, x_N)$, the forward process progressively replaces tokens with the \texttt{[MASK]} symbol. At continuous time $s \in [0, 1]$, each token is independently masked with probability $s$, yielding a corrupted $\x^s$:
\begin{equation}
    q_{s \mid 0}(x^s_i \mid x^0_i) =
    \begin{cases}
        s, & \text{if } x^s_i = \texttt{[MASK]}, \\
        1 - s, & \text{if } x^s_i = x^0_i, \\
        0, & \text{otherwise}.
    \end{cases}
\end{equation}

\paragraph{Training Objective}
dLLMs learn to denoise $\x^s$ by modeling the joint distribution of masked tokens $p_{\text{D}}(\x_{\mathcal{M}_s} \mid \x_{\mathcal{O}_s}; \theta)$ as a product of independent marginals:
\begin{equation}
\label{eq:pd_definition}
    p_{\text{D}}(\x_{\mathcal{M}_s} \mid \x_{\mathcal{O}_s}; \theta) = \prod_{i \in \mathcal{M}_s} p_{\theta}(x_i \mid \x_{\mathcal{O}_s}),
\end{equation}
where $\mathcal{M}_s \coloneqq \{i \mid x^s_i = \texttt{[MASK]}\}$ and $\mathcal{O}_s \coloneqq \{i \mid x^s_i \neq \texttt{[MASK]}\}$ represent masked and observed token set, respectively. The model is optimized to maximize a variational lower bound on the data log-likelihood \cite{lou2024your}:
\begin{equation}
\label{eq:training-elbo}
\log p_{\theta}(\x^0) \geq \mathbb{E}_{s, \x^s} \left[ \frac{1}{s} \log p_{\text{D}} ( \x_{\mathcal{M}_s} \mid \x_{\mathcal{O}_s}; \theta ) \right],
\end{equation}
where $s \sim U(0,1), \x^s \sim q_{s|0}(\cdot \mid \x^0)$.

\paragraph{Generation}
At inference time, the continuous time $s \in [0,1]$ is discretized into $T$ steps via a monotonic schedule $\{s_t\}_{t=1}^T$, where larger $t$ corresponds to a higher masking ratio. At each step $t$, the model selects a subset of masked indices $\mathcal{Q}_t \subseteq \mathcal{M}_t$ to be decoded. Following Eq.~\ref{eq:pd_definition}, tokens within $\mathcal{Q}_t$ are sampled in parallel from the denoising distribution:
$\x_{\mathcal{Q}_t} \sim p_{\text{D}}(\x_{\mathcal{Q}_t} \mid \x_{\mathcal{O}_t}; \theta).$
By iteratively updating the observed context $\x_{\mathcal{O}_{t-1}} = \x_{\mathcal{O}_t} \cup \x_{\mathcal{Q}_t}$, the model refines the sequence from pure noise to structured text over $T$ steps.
\paragraph{Unmasking Strategy}
While any unmasking order is theoretically supported by dLLMs due to their order-agnostic training objective, heuristic strategies are typically adopted in practice to ensure decoding quality~\cite{huang2025pc}. A common principle for selecting $\mathcal{Q}_t$ is to prioritize positions with lower model-predicted uncertainty. For example, tokens with the highest maximum probability (confidence)~\cite{nie2025llada} or the lowest predictive entropy~\cite{ye2025dream} are often selected, both of which consistently outperform random ordering across various tasks. In contrast, margin-based samplers choose positions with the largest gap between the top two candidate probabilities~\cite{kim2025train}, which is particularly effective for structured reasoning tasks such as Sudoku.

\section{Are Heuristic Orders Sufficient?}

\definecolor{skyblue}{RGB}{33,103,173}

\label{sec:motivation}
\begin{figure}[h]
\centering
\begin{minipage}{0.24\textwidth}
    \centering
    \includegraphics[width=\linewidth]{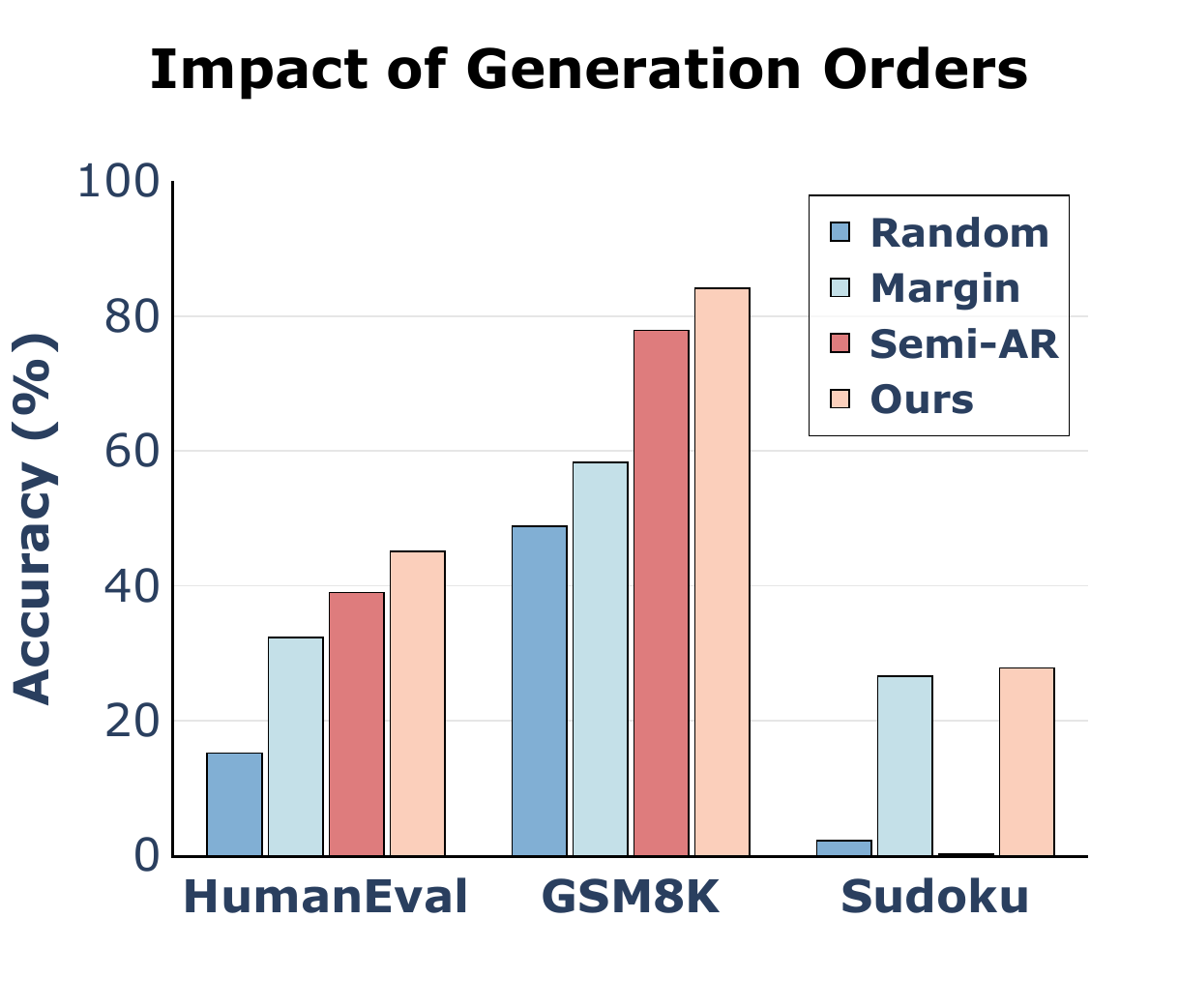}
    \label{fig:order-sensitivity}
\end{minipage}
\hfill
\begin{minipage}{0.236\textwidth}
    \centering
    \includegraphics[width=\linewidth]{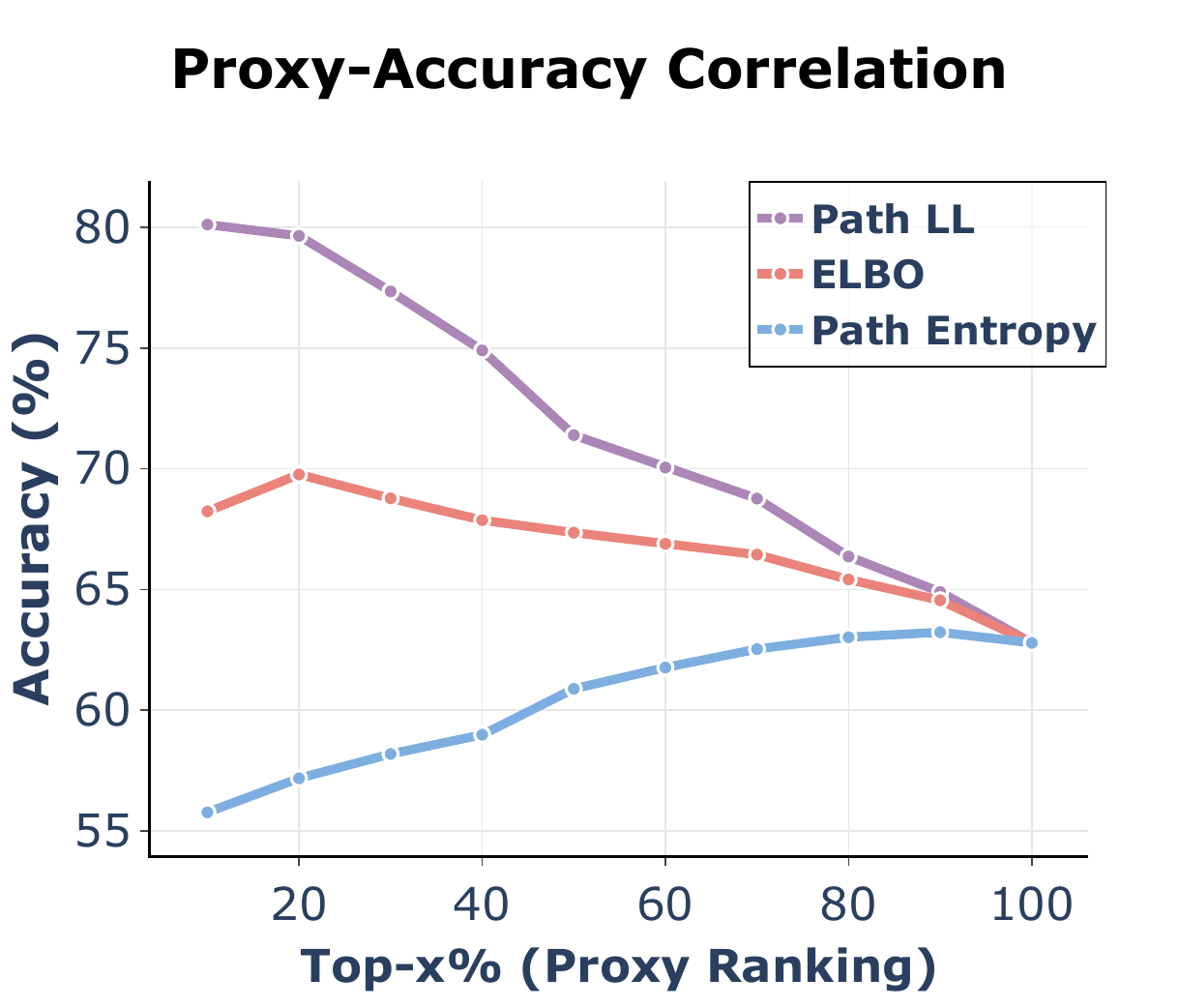}
    \label{fig:sequence-proxy}
\end{minipage}
\vspace{-1.5em}
\caption{\textbf{What defines a good order?} \textit{(Left)} \textbf{dLLM performance across diverse tasks using different ordering mechanisms.} No single static heuristic excels universally, whereas our method achieves consistent gains by dynamically optimizing Path LL. \textit{(Right)} \textbf{Comparison of proxy objectives on GSM8K.} When ranking samples by proxy scores, Path LL exhibits a much stronger correlation with accuracy than the ELBO or Path Entropy.}
\label{fig:teaser}
\end{figure}

While the heuristic strategies mentioned above improve upon random sampling, they remain ``static'' rules that fail to generalize across diverse tasks. As illustrated in Figure~\ref{fig:teaser} (\textit{Left}), we compare representative ordering mechanisms across three distinct domains: HumanEval~\cite{chen2021evaluating} (code), GSM8K~\cite{cobbe2021training} (math), and Sudoku~\cite{ye2024beyond} (planning). We observe a significant \emph{heuristic inconsistency}: for instance, semi-autoregressive (semi-AR) decoding with confidence-based selection~\cite{nie2025llada} excels on code and math but collapses on Sudoku, while margin-based sampling~\cite{kim2025train} is well-suited for Sudoku yet suboptimal elsewhere.

The task-specific nature of these heuristics motivates the search for a \textbf{dynamic unmasking strategy} that can adaptively identify good orders for each specific instance. However, such a search requires a robust objective to evaluate the quality of different decoding paths. For dLLMs, the training ELBO (Eq.~\ref{eq:training-elbo}) might seem like a natural candidate; however, it is theoretically order-agnostic. Because inference performance relies heavily on the specific unmasking trajectory, the ELBO fails to distinguish between effective and ineffective orders. This limitation motivates the use of path log-likelihood, which explicitly captures the sequential dependencies of the decoding process.


Empirically, we find that the ELBO is a poor predictor of success. As shown in Figure~\ref{fig:teaser} (\textit{Right}), we evaluate several sequence-level proxies on GSM8K by generating multiple decoding trajectories for each question and ranking them. While the ELBO and path entropy~\cite{chen2025optimizing} exhibit weak or inconsistent correlations with downstream accuracy, our proposed \textbf{path log-likelihood (Path LL)} exhibits a strong positive correlation. Specifically, ranking samples by Path LL consistently yields the most reliable accuracy curves. For more details of Figure~\ref{fig:teaser}, see Appendix~\ref{appendix:fig1-details}

Formally, we define a decoding trajectory $\tau = (\mathcal{Q}_T, \dots, \mathcal{Q}_1)$ as a sequence of sets of token indices unmasked at each step, forming a partition of $\{1, \dots, N\}$. Starting from a fully masked sequence $\x^T$, the Path LL of a sequence $\x$ under trajectory $\tau$ is:
\begin{equation}
\label{eq:path_ll}
\log p_{\text{D}}(\x; \theta, \tau)
= \sum_{t=1}^{T}
\log p_{\text{D}}(\x_{\mathcal{Q}_{t}} \mid \x_{\mathcal{O}_{t}}; \theta),
\end{equation}
where $\mathcal{O}_t = \bigcup_{j=t+1}^{T} \mathcal{Q}_j$ (with $\mathcal{O}_T = \emptyset$) is the set of tokens observed at step $t$. Unlike the order-agnostic ELBO, Path LL explicitly accounts for the sequential dependencies introduced by the unmasking trajectory $\tau$, making it a robust objective for identifying ``good'' unmasking orders.
\section{How to Optimize for Path LL?}
\label{section:problem}

Motivated by the strong correlation between Path LL and downstream performance (Figure~\ref{fig:teaser}), we aim to identify unmasking trajectories $\tau$ that maximize the Path LL. This objective effectively transforms dLLM decoding into a search problem over the space of unmasking trajectories to find those that induce globally high-quality generations.



\paragraph{The Look-ahead Challenge} The central challenge in this search lies in the \emph{look-ahead evaluation} of partial sequences. Since dLLM tokens are highly interdependent, each local unmasking decision can have long-range effects on the final Path LL. A greedy, locally optimal decision (e.g., picking the most ``confident'' tokens) may prematurely commit the model to a low-probability region of the trajectory space, leading to a suboptimal final sequence. However, accurately calculating the true future potential of a partial sequence is intractable, as it is computationally equivalent to solving the full combinatorial ordering problem. This highlights the need for an efficient approximation mechanism that can estimate the \emph{global potential} without an exhaustive search.

\paragraph{Value-Guided Search via Random Rollouts} To address this complexity, we seek a method to estimate the \textbf{expected future Path LL} of a partial sequence. Ideally, one would average over all possible future trajectories, but the factorial growth of the search space makes this impossible. We therefore require a lightweight proxy that avoids the circular dependency of needing an optimal policy to evaluate the current policy. This motivates the use of uniformly random rollouts as a computationally efficient estimator. By simulating random completions, we can approximate the average quality of the remaining path without training a separate value network. This approach is inspired by value-based search methods like Monte Carlo Tree Search (MCTS)~\cite{kocsis2006bandit}, where the quality of a state is assessed via stochastic simulations.

To ensure our evaluation is not biased toward any specific heuristic, we adopt \textbf{uniformly random rollouts}~\cite{gelly2007combining}. In our context, this means that starting from a partially unmasked sequence, we complete the remaining tokens using a random unmasking order. Prior theoretical work suggests that averaging over such random paths can reliably preserve the relative ranking of different actions under mild smoothness conditions~\cite{james2017analysis}, providing a robust estimate of a state's intrinsic quality without the overhead of a complex rollout policy.

Formally, we define the \textbf{state value} $V(\x_{\mathcal{O}})$ as the expected future Path LL obtained by completing the remaining masked positions $\mathcal{M}$ under a uniformly random unmasking order $\sigma \sim \text{Unif}(\text{Perm}(\mathcal{M}))$:
\begin{equation}
\label{eq:true-value-def}
V(\x_{\mathcal{O}})
\coloneqq
\mathbb{E}_{\sigma, \x_{\mathcal{M}}}
\left[
\log p_{\text{D}}(\x_{\mathcal{M}} \mid \x_{\mathcal{O}}; \theta, \sigma)
\right],
\end{equation}
where $\x_{\mathcal{M}}$ is sampled from the denoising distribution $p_\text{D}(\cdot \mid \x_{\mathcal{O}};\theta, \sigma)$. While $V(\x_{\mathcal{O}})$ captures the expected \emph{global} contribution of a partial sequence, its exact computation is intractable due to the combinatorial space of permutations $\sigma$ and token realizations $\x_{\mathcal{M}}$. This motivates the development of an efficient estimator $\hat{V}$ to enable scalable value-guided decoding.

\section{Efficient Value Estimation}
\label{sec:method1}

As established in \cref{sec:motivation}, value-guided decoding hinges on accurately estimating the state value $V(\x_{\mathcal{O}})$ with minimal computational overhead. While dLLMs enable parallel generation, their inherent independence assumption, i.e., predicting tokens as a product of marginals, fails to capture the complex inter-token dependencies of the model's true joint distribution. Consequently, naive Monte Carlo rollouts using parallel predictions often lead to severe underestimation of the future path LL \citep{liu2025discrete,xu2025energy}.

To formalize this, we distinguish between the model's joint distribution and its factorized approximation. Let $p_\theta(\X_{\mathcal{M}} \mid \x_{\mathcal{O}})$ be the joint distribution over the masked variables induced by the uniform mixture of all possible unmasking orders. Specifically, for any single permutation $\sigma$, the joint distribution is defined by the chain rule: $p_\theta(\x_{\mathcal{M}} \mid \x_{\mathcal{O}}, \sigma) = \prod_{k=1}^{|\mathcal{M}|} p_\theta(\x_{\sigma_k} \mid \x_{\sigma_{1:k-1}}, \x_{\mathcal{O}})$. The true joint distribution is then the average over all permutations:

\begin{equation}
\label{eq:p-joint-def}
p_\theta(\X_{\mathcal{M}} \mid \x_{\mathcal{O}}) \coloneqq \mathbb{E}_{\sigma} [p_\theta(\X_{\mathcal{M}} \mid \x_{\mathcal{O}}, \sigma)],
\end{equation}

which is intractably to query from the model. In contrast, following Eq.~\ref{eq:pd_definition}, we define the factorized distribution $q_{\theta}(\X_{\mathcal{M}} \mid \x_{\mathcal{O}}) \coloneqq \prod_{i \in \mathcal{M}} p_{\theta}(X_i \mid \x_{\mathcal{O}})$, which can be easily computed in a single forward pass.

\paragraph{Value Decomposition}
Notably, the state value $V(\x_{\mathcal{O}})$ defined in Eq.~\ref{eq:true-value-def} is exactly the expected LL of the joint distribution: $V(\x_{\mathcal{O}}) = \mathbb{E}_{\x_{\mathcal{M}} \sim p_\theta} [\log p_\theta(\x_{\mathcal{M}} \mid \x_{\mathcal{O}})]$. By comparing this to the tractable $q_\theta$, we can decompose the value as:
\begin{equation}
\label{eq:joint-decomposition}
V(\x_{\mathcal{O}}) = \underbrace{\mathbb{E}_{\x_{\mathcal{M}} \sim p_\theta} [\log q_\theta(\x_{\mathcal{M}} \mid \x_{\mathcal{O}})]}_{\text{Product LL}} + \mathrm{TC}(\X_{\mathcal{M}} \mid \x_{\mathcal{O}}),
\end{equation}
where the residual term $\mathrm{TC}(\X_{\mathcal{M}} \mid \x_{\mathcal{O}})$ represents the information ``lost'' when assuming masked variables are independent. This gap is formally known as \textbf{Total Correlation}.

\begin{definition}[Total Correlation]\label{def:TC}
Conditioned on the observed context $\x_{\mathcal{O}}$ (omitted for brevity),
the total correlation of masked variables $\X_{\mathcal{M}}$ is the KL-divergence between the joint distribution $p$ and the product of its marginals:
\begin{equation}
\mathrm{TC}(\X_{\mathcal{M}})
=
D_{\mathrm{KL}}
\!\left(
p(\X_{\mathcal{M}})
\,\middle\|\,
\prod_{i \in \mathcal{M}} p(X_i)
\right).
\label{eq:total-corr}
\end{equation}
\end{definition}

\paragraph{$K$-stage Decomposition}

In practice, we consider an iterative $K$-stage unmasking procedure ($K \ll |\mathcal{M}|$) to balance estimation quality with inference speed. For a specific permutation $\sigma$, we partition the masked indices $\mathcal{M}$ into $K$ disjoint groups $\{\mathcal{A}_1, \dots, \mathcal{A}_K\}$ of equal size $|\mathcal{A}| = |\mathcal{M}|/K$. Defining the revealed context at stage $k$ as $\x_{\mathcal{O}_k} \coloneqq \x_{\mathcal{O} \cup \mathcal{A}_1 \cup \dots \cup \mathcal{A}_{k-1}}$, the value decomposition in Eq.~\ref{eq:joint-decomposition} generalizes additively across stages:

\begin{equation}
\label{eq:k-step-decomp}
V(\x_{\mathcal{O}})
=
\mathbb{E}_{p_\theta} \left[ \sum_{k=1}^K \Big( \log q_\theta(\x_{\mathcal{A}_k} \mid \x_{\mathcal{O}_k}) + \mathrm{TC}_k \Big) \right],
\end{equation}

where $\mathrm{TC}_k = \mathrm{TC}(\X_{\mathcal{A}_k} \mid \x_{\mathcal{O}_k})$.

\paragraph{Optimistic Approximation}
The remaining challenge is to estimate the cumulative total correlation in Eq.~\ref{eq:k-step-decomp} without expensive queries. We seek an \textbf{optimistic upper bound} for this term. This choice is motivated by the principle of admissibility in search: by ensuring we never underestimate the true value, we guarantee that we do not wrongly discard partial sequences that are actually superior. This prevents the search from prematurely pruning trajectories that, despite low local scores, retain high global potential.

By leveraging the ``pinning'' analysis of \citet{anari2025parallel}, we observe that randomizing the unmasking order effectively shatters'' token dependencies. This allows us to bound the expected discrepancy using only the marginal entropies, which are already computed by the model.


\begin{lemma}[Cumulative Total Correlation Bound]
\label{lem:cum-tc-bound}
Let $\sigma \sim \mathrm{Unif}(\mathrm{Perm}(\mathcal{M}))$ be a uniformly random permutation inducing the partition $\{\mathcal{A}_1, \dots, \mathcal{A}_K\}$ and context variables $\{\X_{\mathcal{O}_k}\}$. The expected cumulative total correlation satisfies:
\begin{equation}
\label{eq:pinning-to-entropy}
\small
\mathbb{E}_{\sigma} \left[ \sum_{k=1}^K \mathrm{TC}(\X_{\mathcal{A}_k} \mid \X_{\mathcal{O}_k}) \right]
\;\le\; \frac{1}{K} \sum_{i \in \mathcal{M}} \mathcal{H}(X_i \mid \x_{\mathcal{O}}),
\end{equation}
where $\mathcal{H}(X_i \mid \x_{\mathcal{O}})$ is the marginal entropy under the dLLM conditional distribution.
\end{lemma}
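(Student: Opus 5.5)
The plan is to reduce the cumulative total correlation to a telescoping sum of mutual informations, and then apply the pinning lemma of \citet{anari2025parallel}. Throughout, all quantities are under the joint $p_\theta(\cdot\mid\x_{\mathcal{O}})$ of Eq.~\ref{eq:p-joint-def}, and $n=|\mathcal{M}|$.

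\textbf{Step 1: chain-rule expansion of each $\mathrm{TC}_k$.} Order the elements of $\mathcal{A}_k$ as $\sigma$ lists them. By the chain rule, $\mathrm{TC}(\X_{\mathcal{A}_k}\mid\X_{\mathcal{O}_k})$ equals the sum of conditional mutual informations $I(X_{\sigma_j};\X_{\sigma_{a_k:j-1}}\mid \X_{\mathcal{O}_k})$ over $j\in\mathcal{A}_k$, where $\sigma_{a_k:j-1}$ are the earlier positions within the same block. So the cumulative TC is $\sum_j I(X_{\sigma_j};\text{(same-block predecessors)}\mid\text{(all earlier blocks)})$.

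\textbf{Step 2: symmetrize over $\sigma$.} For a uniformly random $\sigma$, the position $\sigma_j$ and the conditioning sets are exchangeable. Consequently the expected term depends only on how many tokens are pinned before the block ($m$) and how many same-block predecessors there are ($r$). Write $g(m)\coloneqq\mathbb{E}_{S,|S|=m,\,i\notin S}\,\mathcal{H}(X_i\mid \X_S)$, the average entropy of a fresh coordinate after pinning a uniformly random set of size $m$. Then the expected mutual information equals $g(m)-g(m+r)$. Summing over the block $r=0,\dots,n/K-1$ and over blocks, the expected cumulative TC becomes
\begin{equation*}
\textstyle\sum_{k=1}^K\sum_{r=0}^{n/K-1}\big[g((k-1)n/K)-g((k-1)n/K+r)\big].
\end{equation*}

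\textbf{Step 3: pinning bound (main obstacle).} The function $g$ is nonincreasing, since conditioning reduces entropy. Its total drop is at most $g(0)=\frac1n\sum_{i}\mathcal{H}(X_i\mid\x_{\mathcal{O}})$. Within block $k$, each bracket is at most $g(m_k)-g(m_{k+1})$ (more precisely, at most the block's internal drop), with $m_k=(k-1)n/K$. Each block has $n/K$ terms, so the sum is bounded by
\begin{equation*}
\tfrac{n}{K}\textstyle\sum_k \big(g(m_k)-g(m_{k+1})\big)\le \tfrac{n}{K}\,g(0)=\tfrac1K\sum_i\mathcal{H}(X_i\mid\x_{\mathcal{O}}).
\end{equation*}
This telescoping is exactly the Anari et al.\ pinning argument.

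The delicate point is rigorously identifying the expected conditional mutual information with the difference $g(m)-g(m+r)$. This requires that, averaged over $\sigma$, the conditioning set "earlier blocks plus same-block predecessors" is a uniform random set of size $m+r$ not containing $\sigma_j$; I expect exchangeability of a uniform permutation to give this directly. I would also handle the notational point that the lemma writes $\X_{\mathcal{O}_k}$ as random: the TC is averaged over the context realizations, and this average is what the chain rule naturally produces.
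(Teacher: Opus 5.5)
Your proof is correct, and it takes a more direct route than the paper's.

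\textbf{The paper's route.} It passes through the potential $\phi(\X_{\mathcal{A}}\mid\cdot)=\sum_{i\in\mathcal{A}}\mathcal{I}(X_i;\X_{\mathcal{A}\setminus\{i\}}\mid\cdot)$ in three steps:
\begin{itemize}
\item it proves $\mathrm{TC}\le\phi$ blockwise, which is Han's inequality, established there by induction;
\item it cites the pinning lemma of \citet{anari2025parallel} for $\phi$ without proof;
\item it drops the nonnegative terms $\mathcal{H}(X_i\mid\X_{\mathcal{M}\setminus\{i\}},\x_{\mathcal{O}})$.
\end{itemize}

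\textbf{Your route.} You expand each $\mathrm{TC}_k$ by the chain rule. You then reduce every expected term to $g(m)-g(m+r)$ by exchangeability. This step is valid: $\{\sigma_1,\dots,\sigma_{m+r}\}$ is a uniform $(m+r)$-subset with $\sigma_{m+r+1}$ uniform on its complement, and the same holds for the first $m$ positions. Finally you telescope, using only that $g$ is nonincreasing and nonnegative.

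\textbf{What each buys.} Your argument is self-contained: it needs neither Han's inequality nor a black-box lemma. In fact it is essentially the proof of the cited lemma. Keeping the last term of the telescope gives $\frac{n}{K}\bigl(g(0)-g(n-1)\bigr)=\frac{1}{K}\phi(\X_{\mathcal{M}}\mid\x_{\mathcal{O}})$, because $n\,g(n-1)=\sum_{i}\mathcal{H}(X_i\mid\X_{\mathcal{M}\setminus\{i\}},\x_{\mathcal{O}})$. That is exactly the paper's intermediate bound after its Step~2. Likewise, your comparison $g(m_k)-g(m_k+r)\le g(m_k)-g(m_k+n/K-1)$, summed over $r$, is the in-expectation form of $\mathrm{TC}\le\phi$. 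The paper's modular route, in turn, isolates $\phi$ as the quantity tracked in the parallel-sampling literature.

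\textbf{One small repair.} $g(n)$ is undefined, since no coordinate lies outside a set of size $n$. For the last block, stop the telescope at $g(n-1)\ge 0$, or set $g(n):=0$, which is consistent with monotonicity and nonnegativity.
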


\begin{figure}[h]
  \centering
  \begin{minipage}[c]{0.24\textwidth}
    \centering
    \includegraphics[width=\textwidth]{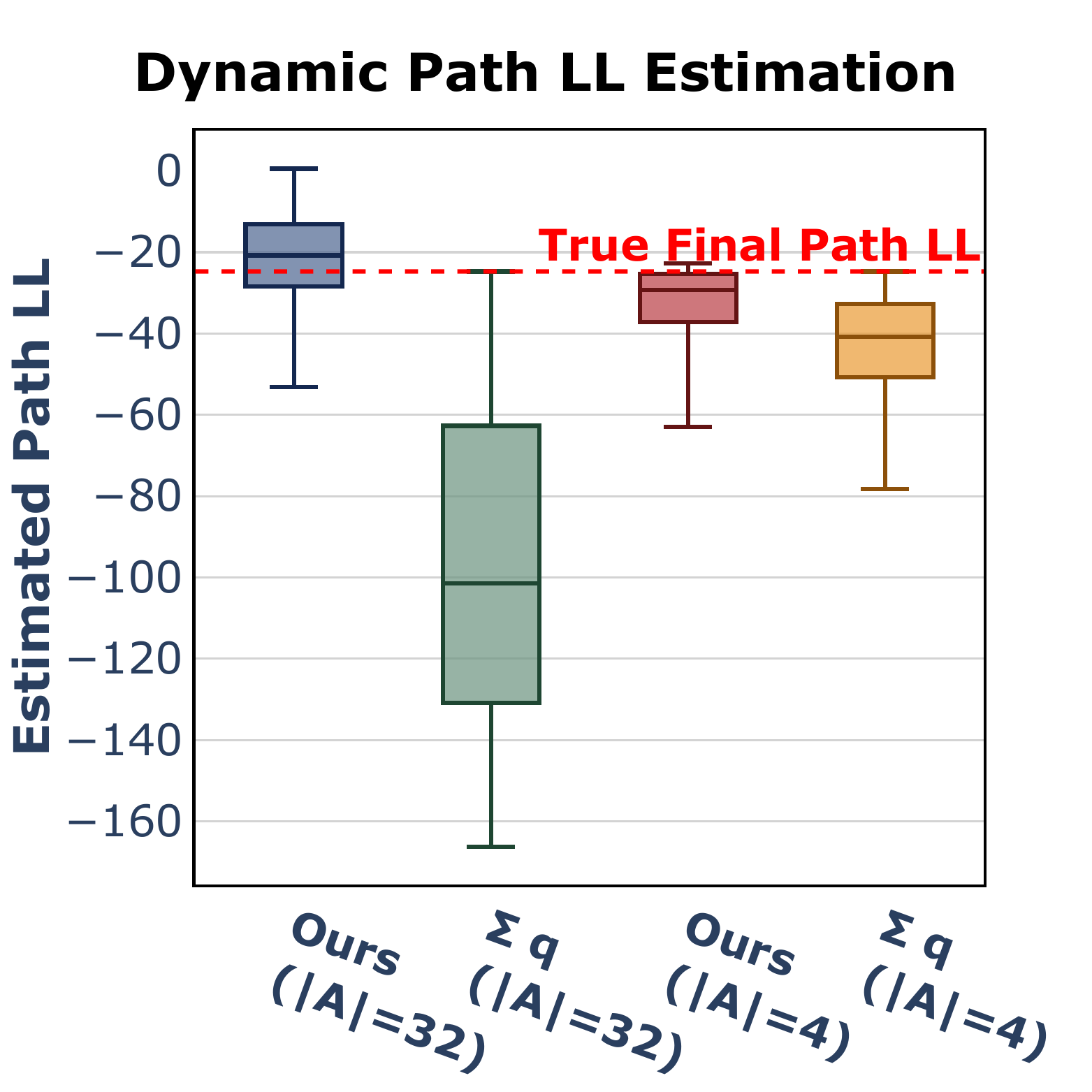}
  \end{minipage}
  \hfill
  \begin{minipage}[c]{0.23\textwidth}
    \caption{\textbf{High-Fidelity Path LL Estimation.} We aggregate path LL estimates across all intermediate steps for a GSM8K sample. While the product LL ($\sum\log q_\theta$) severely underestimates the true LL under large $|\mathcal{A}|$ (small $K$), POKE effectively bridges this gap and achieves accurate estimates.}
    \label{fig:toy_experiment}
  \end{minipage}
\vspace{-1.5em}
\end{figure}

\paragraph{The POKE Estimator}
Combining the $K$-stage decomposition (Eq.~\ref{eq:k-step-decomp}) with the pinning bound (Eq.~\ref{eq:pinning-to-entropy}), we propose the \textbf{P}ath-\textbf{O}ptimistic \textbf{K}-step LL \textbf{E}stimator (\textbf{POKE}). By approximating the expectation with $R$ independent random rollouts, the POKE estimator is defined as:
\begin{equation}
\label{eq:poke-estimator}
\begin{aligned}
\hat{V}_{\text{POKE}}(\mathbf{x}_{\mathcal{O}}) \coloneqq & \frac{1}{R} \sum_{r=1}^R \left[ \sum_{k=1}^K \log q_{\theta}(\mathbf{x}_{\mathcal{A}_k^{(r)}} \mid \mathbf{x}_{\mathcal{O}_k^{(r)}}) \right] \\
& + \frac{1}{K} \sum_{i \in \mathcal{M}} \mathcal{H}(X_i \mid \mathbf{x}_{\mathcal{O}})
\end{aligned}
\end{equation}

Notably, POKE counteracts the independence bias of parallel rollouts via the entropy correction, which naturally vanishes at a rate of $1/K$ as the decoding becomes more sequential. This mechanism is empirically validated in \cref{fig:toy_experiment}, where we evaluate the estimated Path LL, \ie the sum of realized LL and the estimated future value, across all intermediate steps of a GSM8K sample. (see Appendix~\ref{appendix:exp-pave-fidelity})

The results demonstrate that while the naive product LL ($\sum \log q_\theta$) severely underestimates the true joint LL as parallelization increases (e.g., unmasking $|\mathcal{A}|=32$ tokens per step), POKE remains remarkably consistent with the oracle. The detailed algorithm is presented in \cref{alg:poke}. For a formal proof of the POKE estimator, see \cref{appendix:method-proof}.

\vspace{-0.5em}
\section{Path LL Optimization via SMC Search}
\label{section:smc-search}

Our primary goal is to identify a decoding trajectory $\tau$ that maximizes the path LL (Eq.~\ref{eq:path_ll}). In contrast to autoregressive decoding, dLLMs require joint reasoning over token values and unmasking orders, leading to a combinatorially large search space. We address this challenge with \textbf{POKE-SMC}, which integrates our lookahead value estimator into a Sequential Monte Carlo (SMC) framework~\citep{doucet2001introduction} to efficiently explore high-quality decoding paths.

POKE-SMC maintains a population of $M$ particles that evolve from fully masked to fully decoded. While the underlying SMC machinery follows standard practice~\citep{doucet2001introduction}, our method introduces several key design choices that are crucial for effective search in dLLMs:

\paragraph{Global-Guided Weighting}
To accurately compare particles that may follow different unmasking orders, we define importance weights based on the estimated total path log-likelihood. This ensures that all particles are evaluated on a uniform global scale regardless of which specific tokens they have unmasked so far. For a particle $m$ at a resampling step $t$, we first compute its accumulated path LL:
\vspace{-0.5em}
\begin{equation}
\label{eq:cur-ll}
L_t^{(m)} = \sum_{j=T}^{t} \log p_{\text{D}}(\x_{\mathcal{Q}_j}^{(m)} \mid \x_{\mathcal{O}_j}^{(m)}; \theta).
\end{equation}
\vspace{-1em}

The total importance weight $w_t^{(m)}$ is constructed by augmenting Eq.~\ref{eq:cur-ll} with $\hat{V}_{\text{POKE}}$ (the future path LL estimate):
\begin{equation}
\label{eq:smc-weight}
\log w_t^{(m)} = \left( L_t^{(m)} + \hat{V}_{\text{POKE}}(\x_{\mathcal{O}_{t-1}}^{(m)}) \right).
\end{equation}

\vspace{-1em}
Here, $L_t^{(m)} + \hat{V}_{\text{POKE}}$ represents a high-fidelity estimate of the final log-likelihood of the complete sequence.

\paragraph{Efficient Periodic Resampling.}
To balance efficiency and accuracy, we perform resampling only at fixed intervals rather than at every decoding step. This design significantly reduces computational overhead while preserving the benefits of value-guided selection. The resampling temperature and interval are denoted as $\tau_r$ and $\Delta$, respectively.

After all tokens are unmasked, the particle with the highest terminal Path LL is returned as the decoded sequence. For the full SMC pipeline, see \cref{alg:poke-smc} and Appendix~\ref{appendix:poke-smc}.

\begin{algorithm}[t]
\caption{POKE Value Estimation}
\label{alg:poke}
\begin{algorithmic}[1]
\STATE \textbf{Input:} $\x_{\mathcal{O}}$, $\theta$, rollout steps $K$ and count $R$.
\STATE \textbf{Init:} $p_{\theta}(\cdot \mid \x_{\mathcal{O}}), \{\mathcal{H}_i\}_{i \in \mathcal{M}} \gets \text{ModelForward}(\x_{\mathcal{O}})$
\STATE $V_{\text{total}} \gets 0$
\FOR{$r = 1$ \ldots $R$ \textbf{in parallel}}
    \STATE Partition $\mathcal{M}$ into $K$ random subsets $\{\mathcal{A}_1, \dots, \mathcal{A}_K\}$
    \STATE $\mathcal{O}_1 \gets \mathcal{O}$
    \FOR{$k = 1$ \ldots $K-1$}
        \STATE Sample $\x_{\mathcal{A}_k} \sim p_{\theta}(\cdot \mid \x_{\mathcal{O}_k})$
        \STATE $V_{\text{total}} \gets V_{\text{total}} + \log q(\x_{\mathcal{A}_k} \mid \x_{\mathcal{O}_k})$
        \STATE $\x_{\mathcal{O}_{k+1}} \gets \x_{\mathcal{O}_k} \cup \x_{\mathcal{A}_k}$
        \STATE $p_{\theta}(\cdot \mid \x_{\mathcal{O}_{k+1}}) \gets \text{ModelForward}(\x_{\mathcal{O}_{k+1}})$
    \ENDFOR
\ENDFOR
\STATE $V_{\text{base}} \gets V_{\text{total}} / R$
\STATE $V_{\text{corr}} \gets \frac{1}{K} \sum_{i \in \mathcal{M}} \mathcal{H}_i$
\STATE \textbf{Return:} $\hat{V}_{\text{POKE}} = V_{\text{base}} + V_{\text{corr}}$
\end{algorithmic}
\end{algorithm}

\section{Experiment}
\label{sec:experiment}

In this section, we empirically evaluate the effectiveness of \ourmethod framework. While the fidelity of the \pave estimator has been justified by \cref{fig:toy_experiment} (with additional evidence in \cref{appendix:exp-pave-fidelity}), our evaluation here focuses on whether integrating this value guidance into an SMC framework can successfully boost the performance of SoTA dLLMs. 

Crucially, standard dLLM heuristics reach a performance ceiling when decoding sequentially (i.e., one token per step). To achieve further gains, one must shift toward \textbf{inference-time scaling} and utilize additional computational budget via multiple particles or additional iterative remasking and correction. Consequently, our analysis emphasizes \textbf{budget-matched comparisons}: we evaluate whether \ourmethod is more cost-effective than other scaling-oriented baselines, such as multi-sample voting/search or iterative remasking.

\paragraph{Models and Benchmarks}
We conduct experiments on two representative dLLMs: \textbf{LLaDA-8B-Instruct}~\citep{nie2025llada} and \textbf{LLaDA-1.5-8B}~\citep{zhu2025llada1.5} and evaluate \ourmethod across a diverse suite of reasoning-intensive benchmarks: (i)\textit{Mathematical reasoning:} GSM8K \cite{cobbe2021training} and MATH500 \cite{lightman2023let}. (ii) \textit{Code generation}: HumanEval \cite{chen2021evaluating} and MBPP \cite{austin2021program}. (iii)\textit{Planning:} Sudoku~\citep{ye2024beyond} and Countdown~\cite{nolte2024transformers}.

\begin{table*}[t]
\caption{\textbf{Performance on LLaDA models across six benchmarks.} Each step decodes 1 token. The best score is highlighted in \textbf{bold}, and the second best is \underline{underlined}. Among Multi-Sample inference baselines, our method achieves the most improvement under equal budgets, where $M$ denotes the number of particles. Results marked with $\dagger$ are taken from~\citet{huang2025pc}.}
\small
\centering
\begin{tabular}{lllllllll}
\toprule
\textbf{Methods} & \textbf{HumanEval} & \textbf{MBPP} & \textbf{GSM8K} & \textbf{MATH500} & \textbf{Countdown} & \textbf{Sudoku} & \textbf{Avg.} \\
\midrule
\rowcolor[HTML]{E8E8E8}
\multicolumn{8}{c}{\cellcolor[HTML]{E8E8E8}\textbf{LLaDA-8B-Instruct}} \\
Uniform$^\dagger$  (\citeyear{austin2021structured}) & 15.2 & 24.6 & 48.8 & 23.0 & 14.4 & 2.2 & 21.4 \\
Confidence$^\dagger$  (\citeyear{chang2022maskgit}) & 27.4 & 42.4 & 59.1 & 25.8 & 34.0 & 23.8 & 35.4 \\
Entropy$^\dagger$  (\citeyear{ye2025dream}) & 28.1 & 42.2 & 60.9 & 23.6 & 33.8 & 1.6 & 31.7 \\
Margin$^\dagger$  (\citeyear{kim2025train}) & 32.3 & 42.4 & 58.3 & 25.6 & 33.9 & 26.6 & 36.5 \\
EB-Sampler$^\dagger$  (\citeyear{ben2025accelerated}) & 26.8 & 43.3 & 61.2 & 23.8 & 34.1 & 24.2 & 35.6 \\
Semi-AR + Conf (\citeyear{nie2025llada}) & 39.0 & 45.2 & 77.9 & 38.0 & 32.6 & 0.0 & 38.8 \\
Semi-AR + Ent & 40.2 & 45.4 & 80.6 & 37.8 & 31.5 & 0.2 & 39.3 \\
Semi-AR + Marg & 39.6 & 44.7 & 81.1 & 39.4 & 32.6 & 0.0 & 39.6 \\
ReMDM (\citeyear{wang2025remasking}) & 32.6 & 34.0 & 77.9 & 25.4 & 34.0 & 23.8 & 38.0 \\
PC-sampler(\citeyear{huang2025pc}) & \underline{42.7} & \underline{46.6} & 79.2 & 38.4 & 36.3 & \underline{25.8} & 44.8 \\
\rowcolor[HTML]{E3F2FD}
\multicolumn{8}{l}{\cellcolor[HTML]{E3F2FD}\textit{Multi-Sample Inference Methods (based on PC-sampler)}} \\
~+Majority Voting (M=5) & - & - & 81.4 & \underline{40.0} & 36.9 & - & - \\
~+E-SMC (M=5) (\citeyear{chen2025optimizing}) & 41.5 & 45.0 & \underline{81.9} & 39.0 & \underline{40.4} & 25.6 & \underline{45.6}  \textcolor[HTML]{990000}{\small 0.8$\uparrow$} \\
\rowcolor[HTML]{FCE4EC}
~+POKE-SMC (M=4) & \textbf{45.1} & \textbf{47.1} & \textbf{84.1} & \textbf{41.0} & \textbf{40.8} & \textbf{27.8} & \textbf{47.7   \textcolor[HTML]{990000}{\small 2.9$\uparrow$} } \\
\midrule
\rowcolor[HTML]{E8E8E8}
\multicolumn{8}{c}{\cellcolor[HTML]{E8E8E8}\textbf{LLaDA-1.5-8B}} \\
Uniform$^\dagger$  (\citeyear{austin2021structured}) & 17.7 & 23.0 & 52.7 & 21.2 & 15.8 & 3.4 & 22.3 \\
Confidence$^\dagger$  (\citeyear{chang2022maskgit}) & 28.1 & 43.3 & 60.7 & 24.4 & 33.8 & 24.8 & 35.9 \\
Entropy$^\dagger$  (\citeyear{ye2025dream}) & 32.9 & 44.0 & 60.3 & 23.6 & 34.7 & 0.2 & 32.6 \\
Margin$^\dagger$  (\citeyear{kim2025train}) & 25.0 & 43.3 & 57.5 & 24.6 & 31.8 & 33.6 & 36.0 \\
EB-Sampler$^\dagger$  (\citeyear{ben2025accelerated}) & 32.9 & 43.6 & 61.1 & 24.0 & 34.6 & 0.2 & 32.7 \\
Semi-AR + Conf (\citeyear{nie2025llada}) & 39.6 & 46.8 & 80.7 & 38.4 & 32.4 & 0.0 & 39.7 \\
Semi-AR + Ent & 37.8 & 46.6 & 81.2 & 39.2 & 30.9 & 0.2 & 39.3 \\
Semi-AR + Marg & 40.2 & 44.7 & 81.5 & 38.4 & 31.5 & 0.0 & 39.4 \\

ReMDM(\citeyear{wang2025remasking}) & 33.8 & 40.5 & 80.1 & 24.4 & 33.8 & 24.8 & 39.6 \\
PC-sampler(\citeyear{huang2025pc}) & \textbf{45.7} & \underline{49.9} & 80.5 & \underline{42.2} & \underline{34.9} & \underline{28.8} & 47.0 \\
\rowcolor[HTML]{E3F2FD}
\multicolumn{8}{l}{\cellcolor[HTML]{E3F2FD}\textit{Multi-Sample Inference Methods (based on PC-sampler)}} \\
~+Majority Voting (M=5) & - & - & \underline{83.3} & 41.8 & 36.0 & - & -  \\
~+E-SMC (M=5) (\citeyear{chen2025optimizing}) & 42.7 & 48.0 & \underline{83.3} & \underline{42.2} & \underline{38.6} & \underline{28.8} & \underline{47.3} \textcolor[HTML]{990000}{\small 0.3$\uparrow$}\\
\rowcolor[HTML]{FCE4EC}
~+POKE-SMC (M=4) & \underline{45.1} & \textbf{50.4} & \textbf{83.6} & \textbf{44.0} & \textbf{39.7} & \textbf{31.4} & \textbf{49.0  \textcolor[HTML]{990000}{\small 2.0$\uparrow$} } \\
\bottomrule
\end{tabular}
\label{tab:main_results}
\end{table*}

\paragraph{Baselines}
We compare \ourmethod against several classes of dLLM sampling baselines: (i) \textit{Heuristic Samplers:} Standard \textbf{Uniform} sampling \cite{austin2021structured} and uncertainty-based methods (\textbf{Confidence}~\cite{chang2022maskgit}, \textbf{Entropy}~\cite{ye2025dream}, \textbf{Margin}~\cite{kim2025train}), and the efficient \textbf{EB-Sampler} (Entropy-Bounded)~\citep{ben2025accelerated}. (ii) \textit{Samplers with Positional Biases:} \textbf{Semi-AR} \citep{nie2025llada} (using 8 sequential blocks) with intra-block selection (\textbf{+Conf}, \textbf{+Ent}, \textbf{+Marg}), and \textbf{PC-Sampler} \cite{huang2025pc}, which calibrates confidence via position-aware weighting and token frequency adjustments. (iii)  \textit{Inference Scaling Baselines:} To match the increased budget of \ourmethod, we include: (1) \textbf{Majority Voting}, scaling via independent samples; (2) \textbf{ReMDM}~\cite{wang2025remasking}, which scales by adding iterative remasking and self-correction steps; and (3) \textbf{E-SMC}~\cite{chen2025optimizing}, a search-based baseline utilizing state entropy as guidance. E-SMC represents the class of entropy-guided search (similar to LookUM~\cite{lee2025lookahead}, see Appendix~\ref{appendix:baseline-details} for the baseline details);

\vspace{-0.5em}

\paragraph{Implementation Details}
\ourmethod utilizes PC-Sampler as the base path proposal policy, a choice kept consistent for all multi-particle baselines (E-SMC and Majority Voting). To expand trajectories, we sample from the top-$k$ positions using a categorical distribution: $P(i \mid \mathbf{x}_{\mathcal{O}}) \propto \exp(s_i / \tau_{p})$, where $s_i$ are scores from the base policy and $\tau_{p}$ is the proposal temperature. 

The primary computational overhead of \ourmethod stems from maintaining $M$ particles and performing \pave evaluations during resampling. We set $M=4$ and a resampling interval $\Delta = T/4$. For \pave, we employ $R=2$ rollouts with $K = |\mathcal{M}|/16$ parallel steps each, \ie 16 tokens are decoded per rollout step for efficiency. For the detailed hyperparameters, see Appendix~\ref{appendix:hyperparams}.

\begin{figure*}[t]
    \centering
    \begin{subfigure}[b]{0.32\textwidth}
        \centering
        \includegraphics[width=\textwidth]{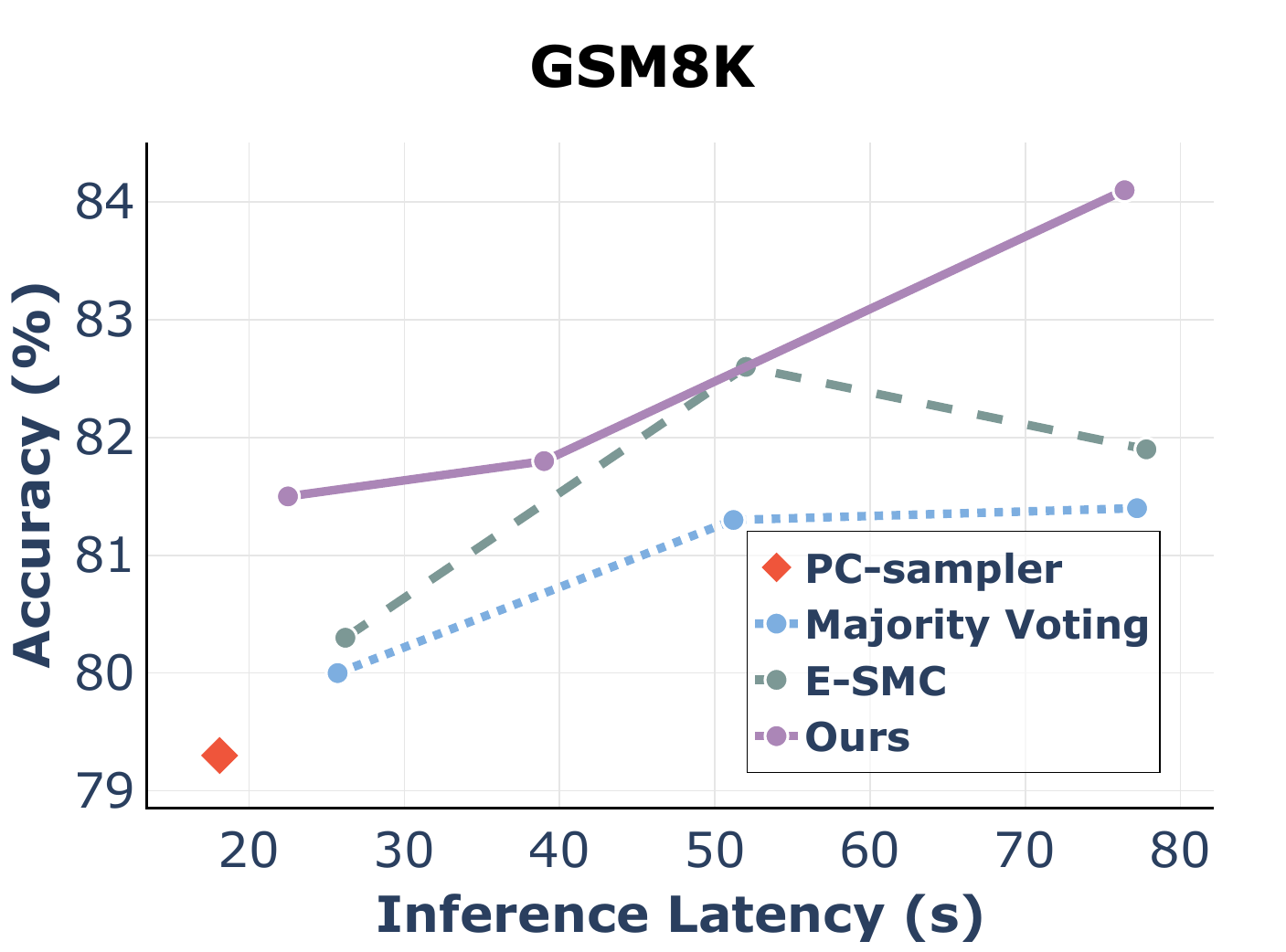}
        \label{fig:gsm8k}
    \end{subfigure}
    \hfill
    \begin{subfigure}[b]{0.32\textwidth}
        \centering
        \includegraphics[width=\textwidth]{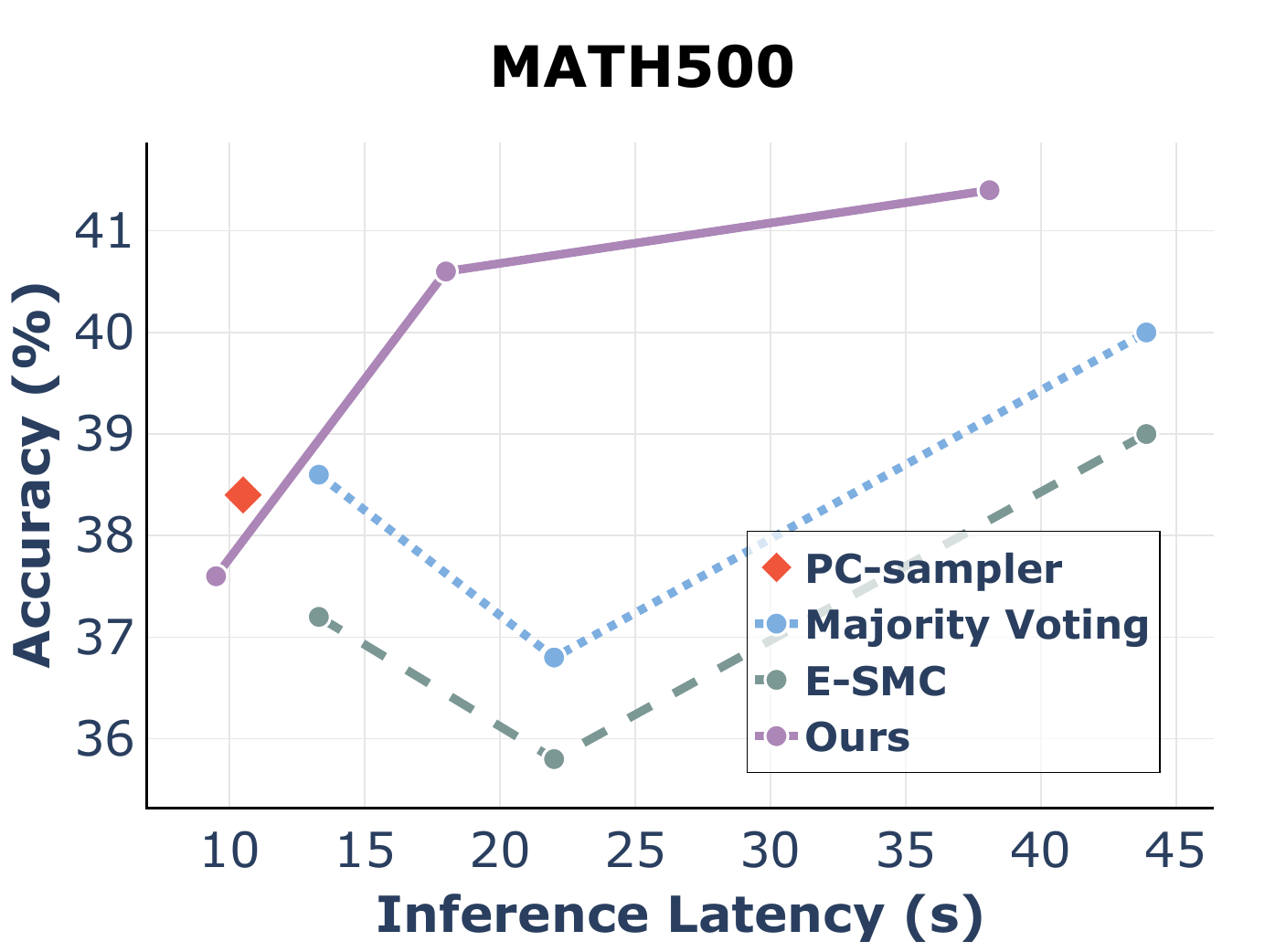}
        \label{fig:math}
    \end{subfigure}
    \hfill
    \begin{subfigure}[b]{0.32\textwidth}
        \centering
        \includegraphics[width=\textwidth]{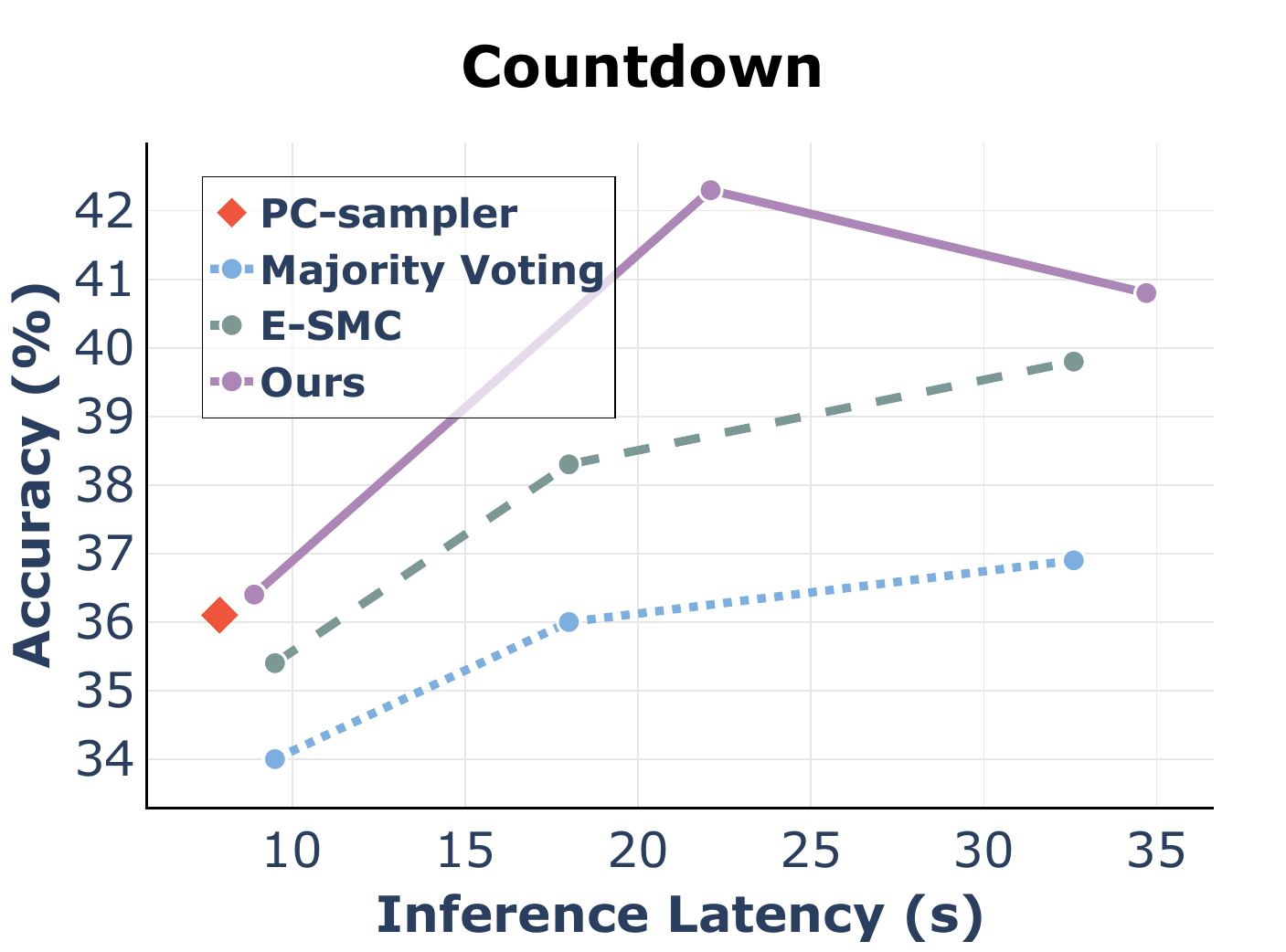}
        \label{fig:countdown}
    \end{subfigure}
    \vspace{-1.6em}
    \caption{\textbf{Inference-time scaling efficiency on GSM8K, MATH500, and Countdown.} Accuracy is plotted against average inference latency (s) per sample. Our method (purple) consistently establishes a superior Pareto frontier across all tasks, scaling more effectively than Majority Voting (blue) and E-SMC (green). }
\label{fig:efficiency_tradeoff}
\vspace{-1.0em}
\end{figure*}

\subsection{Main Results}
\label{subsec:exp-main-results}

Table~\ref{tab:main_results} reports results on two LLaDA models across six reasoning benchmarks under a unified decoding regime, where each step decodes one token. The dataset-specific setups are provided in Table~\ref{tab:hyperparam-by-dataset}.
We categorize methods into two groups:
(i) \emph{single-particle baselines}, which operate under a standard budget with a mean runtime of approximately 11.5s per sample. To ensure a fair assessment of latency, we measure runtime on a single A800 GPU by fixing the prompt length to the median of each respective dataset and averaging results across all benchmarks. Among these, PC-Sampler consistently emerges as the strongest baseline.
and (ii) \emph{inference-scaling methods}, which explicitly trade extra computation for improved performance through multiple particles or iterative refinement. To maintain a rigorous budget-matched comparison, we report the best performance for each method under a fixed scaling budget. Specifically, \ourmethod ($M=4$) requires 42.2s per sample, representing a roughly $4\times$ overhead. We compare this against Majority Voting and E-SMC ($M=5$), which require a nearly identical 42.5s.

\noindent \textbf{Scaling Efficiency Analysis.} By effectively utilizing the additional computation, \ourmethod achieves average accuracy gains of 2.9\% on LLaDA-8B-Instruct and 2.1\% on LLaDA-1.5, establishing a new state-of-the-art for dLLM decoding. These improvements are most pronounced in reasoning-intensive tasks such as GSM8K and Countdown, where accuracy increases by 4.5\%--5\%. In contrast, alternative scaling strategies yield significantly lower returns. E-SMC provides only marginal average gains over the base policy (+0.8\% on LLaDA and +0.3\% on LLaDA-1.5). On benchmarks such as GSM8K and MATH500, E-SMC's performance is occasionally inferior to simple Majority Voting, whereas \ourmethod maintains a substantial lead across all tasks. Furthermore, we compare against ReMDM, a self-contained framework (w/o PC-sampler) that utilizes iterative correction ($2\times$ overhead; see Appendix~\ref{appendix:baseline-details}). Despite its dedicated remasking and self-correction phases, ReMDM frequently fails to outperform even basic single-particle baselines. These results confirm that \pave-guided search is a far more effective mechanism for translating increased compute into reasoning capabilities than either independent sample aggregation or unguided iterative refinement.




\subsection{Performance-efficiency trade-off}

To investigate how different methods scale with computational resources, we evaluate the performance-efficiency trade-off on GSM8K, MATH500, and Countdown. We control the runtime by adjusting the number of particles $M$ and the decoding steps $T$ to observe the evolution of accuracy (See Appendix~\ref{appendix:fig3-details}). As illustrated in Figure~\ref{fig:efficiency_tradeoff}, \ourmethod consistently establishes a superior performance--efficiency Pareto frontier across all benchmarks. 

On GSM8K, \ourmethod scales monotonically from 81.5\% at 22.5s to 84.1\% at 76.4s, while Majority Voting quickly saturates and E-SMC exhibits performance degradation at higher budgets. On MATH500, E-SMC and Majority Voting show clear non-monotonicity (e.g., Majority Voting 38.6\% $\rightarrow$ 36.8\% $\rightarrow$ 40.0\%), whereas \ourmethod improves monotonically from 37.6\% to 41.4\% as computation increases. A similar trend is observed on Countdown. While Majority Voting saturates early (34.0\% $\rightarrow$ 36.9\%) and E-SMC increases more gradually (35.4\% $\rightarrow$ 39.8\%), \ourmethod achieves a substantially higher peak accuracy of 42.3\% at 22.1s and remains competitive at larger budgets (40.8\% at 34.7s).
Overall, these results demonstrate that \ourmethod not only utilizes additional computation more effectively, but also provides a more robust and scalable inference-time optimization strategy for complex reasoning tasks.

\subsection{Ablation Studies}

To further dissect the factors contributing to the efficacy of \ourmethod, we analyze the necessity of intermediate guidance and the robustness of our estimator in accelerated decoding scenarios.

\begin{table*}[h]
\centering
\caption{\textbf{Dynamic Guidance vs. Post-hoc Selection (LLaDA-8B, $M=4$).} While Best-of-$N$ (reranking with Path LL) improves over the base sampler, our SMC-based \ourmethod yields superior results by actively steering trajectories toward high-value regions mid-generation.}
\label{tab:best_of_n}
\begin{tabular}{lccccccc}
\toprule
\textbf{Methods} & \textbf{HEval} & \textbf{MBPP} & \textbf{GSM8K} & \textbf{MATH} & \textbf{Countdown} & \textbf{Sudoku} & \textbf{Avg.} \\
\midrule
PC-sampler & 42.7 & 46.6 & 79.2 & 38.4 & 36.3 & 25.8 & 44.8 \\
~+Best-of-N & 41.5 & \textbf{48.7} & 83.4 & 40.4 & 38.8 & 26.2 & 46.5 \\
\rowcolor[HTML]{FCE4EC}
~+POKE (Ours) & \textbf{45.1} & 47.1 & \textbf{84.1} & \textbf{41.0} & \textbf{40.8} & \textbf{27.8} & \textbf{47.7} \\
\bottomrule
\end{tabular}
\end{table*}

We first investigate whether the gains of \ourmethod stem solely from the discriminative power of the Path LL or from the active resampling mechanism. Table~\ref{tab:best_of_n} compares our method against a \textbf{Best-of-$N$} strategy, which independently generates $M$ paths and selects the best one using the final Path LL. While Best-of-$N$ improves upon the base sampler, confirming Path LL as a robust proxy for sequence quality, it lags behind \ourmethod, which achieves an additional 1.2\% average gain. These results confirm that \pave is most effective as a dynamic guider, and its ability to re-allocate search budget toward promising trajectories mid-generation is superior to simply picking the best finished sequence.

Moreover, we also demonstrates superior performance in \emph{high-throughput decoding scenarios}. As shown in Table~\ref{tab:parallel}, when unmasking two tokens per step, \ourmethod maintains a substantial lead, achieving average improvements of \textbf{5.4\%} and \textbf{4.9\%} on the two LLaDA models, respectively, highlighting the robustness of \pave (See Appendix~\ref{appendix:2-tokens-per-step}). Additional ablations, including further comparisons with LookUM~\cite{lee2025lookahead} and hyperparameter sensitivity analyses, are provided in Appendix~\ref{appendix:lookum} and \ref{appendix:sensitivity}, respectively.

\section{Related Work}
\label{sec:related_work}

\paragraph{Diffusion Language Models}
Diffusion models~\citep{sohl2015deep, ho2020denoising, song2020denoising}, originally successful in continuous image modeling~\citep{dhariwal2021diffusion}, have been recently extended to discrete text generation~\citep{hoogeboom2021argmax, campbell2022continuous, lou2023discrete}. As a general foundational framework, D3PM~\citep{austin2021structured} formulates diffusion over discrete spaces with a fixed forward Markov process and a learnable reverse transition. Among its variants, Masked Diffusion Models (MDMs), also known as absorbing-state diffusion, have become the predominant approach due to their simplicity and strong empirical performance. Recent theoretical refinements in training objectives \citep{sahoo2024simple, lou2024your, shi2024simplified} have facilitated the scaling of diffusion language models (dLLMs), exemplified by LLaDA \citep{nie2025llada} and Dream \citep{ye2025dream}, positioning them as viable alternatives to autoregressive models. Furthermore, Reinforcement Learning (RL) techniques are now being employed to enhance dLLMs, evolving from early token-level adaptations \citep{zhao2025d1, yang2025mmada} to sophisticated sequence-level optimization strategies \citep{zhu2025llada1.5, wang2025spg}.

\paragraph{Inference Strategies for dLLMs} 
Beyond architectural scaling, the efficacy of dLLMs relies heavily on inference strategies that balance generation quality, speed, and flexibility. Central to these developments—and most relevant to our work—is the design of the token unmasking strategies. Early heuristic approaches rely on local uncertainty metrics to determine unmasking order, such as highest maximum probability~\citep{chang2022maskgit}, lowest entropy~\citep{ye2025dream,ben2025accelerated}, or largest top-k confidence gap~\citep{kim2025train}. Other approaches introduce structural priors via blockwise semi-autoregressive mechanisms \citep{arriola2025block, nie2025llada} or by integrating positional biases with content-aware confidence scores \citep{huang2025pc}. More recently, learning-based methods have emerged to explicitly optimize decoding paths by training planners or position selection policies~\citep{huang2025reinforcing, peng2025p2}. To further enhance flexibility, some frameworks allow for the correction of earlier decisions via remasking mechanisms \citep{peng2025p2, wang2025remasking}. While these works focus on quality and flexibility, a parallel line of research emphasizes inference throughput via advanced parallel decoding strategies \citep{park2024jump, israel2025accelerating, wu2025fast, liu2025plug}. Concurrently, \citet{chen2025optimizing} and \citet{lee2025lookahead} also utilize Sequential Monte Carlo (SMC) to optimize decoding trajectories, yet differ from our approach by relying on one-step entropy-based value estimators (see Appendix~\ref{appendix:baseline-details}).

\section{Limitations and Future Work}

We presented a path-aware inference-time scaling framework for dLLMs based on the Path LL objective and the POKE estimator, and demonstrated its effectiveness through the POKE-SMC algorithm. While our results show consistent accuracy gains with favorable accuracy--compute trade-offs, our empirical evaluation is currently limited to 8B-scale models, such as LLaDA and LLADA-1.5. An important direction for future work is to validate the proposed framework on larger and more capable dLLMs (e.g., LLaDA2.0~\citep{bie2025llada2}
) to assess its scalability under stronger base models.

In addition, the resampling strategy in POKE-SMC is straightforward, relying on periodic resampling at fixed intervals. Our algorithm, however, defines Path LL over a global variable scope, enabling direct comparison of partial states at different decoding stages. This property suggests more flexible inference mechanisms, such as adaptive rollback and regeneration when the base policy drifts toward low-probability regions, potentially inducing new inference-time scaling regimes beyond monotonic forward decoding.

\newpage
\section*{Impact Statement}
This paper presents work whose goal is to advance the field of Machine Learning. There are many potential societal consequences of our work, none which we feel must be specifically highlighted here.

\section*{Acknowledgement}
This work was supported in part by the National Science and Technology Major Project (2022ZD0114902); It was also funded in part by the National University of Singapore under its Start-up Grant (Award No: SUG-251RES250).

\nocite{langley00}

\bibliography{icml2026}
\bibliographystyle{icml2026}

\newpage
\appendix
\onecolumn

\section{Details of POKE-SMC}
\label{appendix:poke-smc}
POKE-SMC maintains $M$ candidate trajectories (particles) $\{\x_{\mathcal{O}_t}^{(m)}\}_{m=1}^M$ that evolve as $t$ decreases from $T$ to $1$. By utilizing POKE as a principled guidance signal, the algorithm dynamically reallocates its computational budget toward high-potential paths while pruning those likely to result in incoherent sequences. The process is structured into three iterative stages:

\paragraph{1. Path Proposal}
At step $t$, for each particle $m$, a set of tokens $\x_{\mathcal{Q}_t}^{(m)}$ is sampled from the dLLM reverse kernel, $\x_{\mathcal{Q}_t}^{(m)} \sim p_{\text{D}}(\cdot \mid \x_{\mathcal{O}_t}^{(m)}; \theta)$, and update $\x_{\mathcal{O}_{t-1}}^{(m)} = \x_{\mathcal{O}_t}^{(m)} \cup \x_{\mathcal{Q}_t}^{(m)}$. To maintain efficiency, the proposal step can be performed for multiple iterations before triggering the evaluation stage. We define a resampling interval $\Delta$, such that weighting and resampling only occur when $(T - t + 1) \pmod \Delta = 0$.

\paragraph{2. Global-Guided Weighting}
To accurately compare particles that may follow different unmasking orders, we define importance weights based on the estimated total path log-likelihood. This ensures that all particles are evaluated on a uniform global scale regardless of which specific tokens they have unmasked so far. For a particle $m$ at a resampling step $t$, we first compute its accumulated path LL:
\begin{equation}
L_t^{(m)} = \sum_{j=T}^{t} \log p_{\text{D}}(\x_{\mathcal{Q}_j}^{(m)} \mid \x_{\mathcal{O}_j}^{(m)}; \theta).
\end{equation}

The total importance weight $w_t^{(m)}$ is constructed by augmenting this partial reward with $\hat{V}_{\text{POKE}}$ that accounts for the future path LL:
\begin{equation}
\label{eq:smc-weight}
\log w_t^{(m)} = \left( L_t^{(m)} + \hat{V}_{\text{POKE}}(\x_{\mathcal{O}_{t-1}}^{(m)}) \right).
\end{equation}

Here, $L_t^{(m)} + \hat{V}_{\text{POKE}}$ represents a high-fidelity estimate of the final log-likelihood of the complete sequence.

\paragraph{3. Resampling Step}

To focus computational resources on the most promising trajectories, we perform multinomial resampling, where we introduce a resampling temperature $\tau_r > 0$ to control the sharpness of the selection process. Particles are sampled with replacement according to:
\begin{equation}
 \Pr(\text{select particle}\ m) = \frac{\exp(\log w_t^{(m)} / \tau_r)}{\sum_{j=1}^M \exp(\log w_t^{(j)} / \tau_r)}.
\end{equation}

Following resampling, the $M$ selected particles continue to the next proposal step. This mechanism effectively steers the decoder away from myopic, locally certain paths that lack global coherence. After $T$ steps, the sequence $\x^{(m)}$ from the particle with the highest terminal path LL is returned. The full pipeline is detailed in Algorithm~\ref{alg:poke-smc}.

\begin{algorithm}[h]
\caption{POKE-SMC Search}
\label{alg:poke-smc}
\begin{algorithmic}[1]
\REQUIRE dLLM $p_\theta$, total steps $T$, particle count $M$, resampling interval $\Delta$, resampling temperature $\tau_r$, POKE rollout steps $K$.
\STATE \textbf{Initialization}: Initialize $\{\x_{\mathcal{O}_T}^{(m)}\}_{m=1}^M$ with $\mathcal{O}_T = \emptyset$ and $L_T^{(m)} = 0$ for all $m \in [M]$.
\FOR{$t = T$ \ldots $1$}
    \STATE \textbf{// 1. Path Proposal}
    \STATE Select $\mathcal{Q}_t$ and \FOR{\textbf{each} $m$ \textbf{in parallel}}
        \STATE $\x_{\mathcal{Q}_t}^{(m)} \sim p_{\text{D}}(\cdot \mid \x_{\mathcal{O}_t}^{(m)}; \theta)$.
        \STATE $\x_{\mathcal{O}_{t-1}}^{(m)} \leftarrow \x_{\mathcal{O}_t}^{(m)} \cup \x_{\mathcal{Q}_t}^{(m)}$.
        \STATE $L_{t-1}^{(m)} \leftarrow L_t^{(m)} + \log p_{\text{D}}(\x_{\mathcal{Q}_t}^{(m)} \mid \x_{\mathcal{O}_t}^{(m)}; \theta)$.
    \ENDFOR
    \STATE \textbf{// 2. Global-Guided Weighting \& Resampling}
    \IF{$(T - t + 1) \pmod \Delta = 0$ \textbf{or} $t = 1$}
        \FOR{\textbf{each} particle $m$ \textbf{in parallel}}
            \STATE Compute $\hat{V}_{\text{POKE}}(\mathbf{x}_{\mathcal{O}_{t-1}}^{(m)})$ via \cref{alg:poke}
            \STATE $\log w_t^{(m)} \leftarrow L_{t-1}^{(m)} + \hat{V}_{\text{POKE}}(\mathbf{x}_{\mathcal{O}_{t-1}}^{(m)})$ 
        \ENDFOR
        
        \STATE Normalize: $\bar{w}_t^{(m)} \leftarrow \frac{\exp(\log w_t^{(m)} / \tau_r)}{\sum_{j=1}^M \exp(\log w_t^{(j)} / \tau_r)}$

        \STATE Resample $M$ particles from $\{\mathbf{x}_{\mathcal{O}_{t-1}}^{(m)}, L_{t-1}^{(m)}\}_{m=1}^M$ according to $\{\bar{w}_t^{(m)}\}_{m=1}^M$
    \ENDIF

\ENDFOR
\STATE \textbf{Output}: Return $\x^{(m)}$ with the highest $L_0^{(m)}$.
\end{algorithmic}
\end{algorithm}

\section{Additional Experiments}
\subsection{Enhanced Advantage in Parallel Decoding}
\label{appendix:2-tokens-per-step}
\begin{table*}[h]
\centering
\caption{\textbf{Performance under parallel decoding (2 tokens per step).} \ourmethod significantly outperforms all baselines across both LLADA and LLADA-1.5 models, demonstrating high robustness to reduced decoding steps.}
\label{tab:parallel}
\begin{tabular}{lccccccc}
\toprule
\textbf{Methods} & \textbf{HEval} & \textbf{MBPP} & \textbf{GSM8K} & \textbf{MATH} & \textbf{Count.} & \textbf{Sudo.} & \textbf{Avg.} \\
\midrule
\rowcolor[HTML]{E8E8E8}
\multicolumn{8}{c}{\cellcolor[HTML]{E8E8E8}\textbf{LLaDA-8B-Instruct}} \\
PC-sampler & 22.0 & 39.8 & 78.5 & 34.4 & 29.4 & 24.2 & 38.1 \\
~+Majority Voting (M=5) & - & - & 81.4 & 36.6 & 36.0 & - & - \\
~+E-SMC ($M=5$) & 25.6 & 39.1 & 81.2 & 35.8 & 38.3 & 24.0 & 40.7 \textcolor[HTML]{990000}{\scriptsize 2.6$\uparrow$} \\
\rowcolor[HTML]{FCE4EC}
~+POKE (Ours, $M=4$) & \textbf{26.8} & \textbf{44.7} & \textbf{81.5} & \textbf{40.2} & \textbf{42.3} & \textbf{25.6} & \textbf{43.5 \textcolor[HTML]{990000}{\scriptsize 5.4$\uparrow$}} \\
\midrule
\rowcolor[HTML]{E8E8E8}
\multicolumn{8}{c}{\cellcolor[HTML]{E8E8E8}\textbf{LLaDA-1.5-8B}} \\
PC-sampler & 23.8 & 42.9 & 78.1 & 37.2 & 28.2 & 25.6 & 39.3 \\
~+Majority Voting (M=5) & - & - & 82.4 & \textbf{41.2} & 29.3 & - & - \\
~+E-SMC ($M=5$) & 25.0 & 43.6 & 82.5 & 38.2 & 34.9 & 26.6 & 41.8 \textcolor[HTML]{990000}{\scriptsize 2.5$\uparrow$} \\
\rowcolor[HTML]{FCE4EC}
~+POKE (Ours, $M=4$) & \textbf{31.1} & \textbf{46.4} & \textbf{84.0} & 39.4 & \textbf{36.6} & \textbf{27.8} & \textbf{44.2 \textcolor[HTML]{990000}{\scriptsize 4.9$\uparrow$}} \\
\bottomrule
\end{tabular}
\end{table*}

 We further evaluate \ourmethod in the parallel decoding setting by unmasking 2 tokens per step. Interestingly, the performance gap between \ourmethod and other baselines becomes even larger. As shown in Table~\ref{tab:parallel}, our method maintains a substantial lead, achieving an average improvement of 5.4\% and 4.9\% across the two models. This demonstrates that our path-likelihood estimator remains highly reliable even when the model generates more tokens at each step, making it a robust choice for faster inference scenarios.

\subsection{Comparison with LookUM}
\label{appendix:lookum}

We further compare \ourmethod with \textbf{LookUM}~\citep{lee2025lookahead}, another entropy-guided SMC decoding method. To faithfully reproduce the original LookUM setting, we evaluate on \textbf{LLaDA-8B-Instruct} and use the same \textbf{base policy} for both methods: \textbf{Semi-AR + Entropy}. Specifically, we partition the sequence into sequential blocks and select tokens within each block using entropy-based unmasking. Following common Semi-AR configurations~\cite{nie2025llada}, we set the block size to 32 for all benchmarks, except for Sudoku where we use a single block (i.e., 1-block Semi-AR) to match the structured nature of the output format. All dataset prompts and evaluation protocols follow Table~\ref{tab:hyperparam-by-dataset}.

We run both LookUM and \ourmethod under the same decoding regime:
\textbf{one token per step} and \textbf{$M=4$ particles}.
This ensures that the two methods differ primarily in the \emph{guidance signal} used to rank and resample candidate paths:

\begin{table}[h]
\centering
\caption{\textbf{Comparison with LookUM on LLaDA-8B-Instruct under the LookUM base policy.} Both methods use Semi-AR+Entropy as the base policy (block size 32, except Sudoku with 1 block), one token per step, and $M=4$ particles.
Dataset settings follow Table~\ref{tab:hyperparam-by-dataset}.}

\setlength{\tabcolsep}{6pt}
\begin{tabular}{lccccccc}
\toprule
Methods & HumanEval & MBPP & GSM8K & MATH500 & Countdown & Sudoku & Avg. \\
\midrule
LookUM  & 42.7 & 45.4 & 81.8 & 37.6 & 33.0 & 25.0 & 44.3 \\
\ourmethod & 41.5 & 46.2 & 82.6 & 40.0 & 35.0 & 26.4 & 45.3 \\
\bottomrule
\end{tabular}
\vspace{2pt}
\label{tab:lookum}
\end{table}

\paragraph{Discussion.}
Under the reproduced LookUM setting, \ourmethod achieves a higher average score (44.3 $\rightarrow$ 45.3). The gains are consistent on most benchmarks, suggesting that replacing entropy-based guidance with a path LL objective improves the effectiveness of SMC-based inference scaling even when the underlying unmasking policy is fixed to Semi-AR+Entropy.

\section{Hyperparameters and Sensitivity}
\label{appendix:hyperparams}

\begin{table}[h]
\centering

\setlength{\tabcolsep}{6pt}
\caption{\textbf{Dataset-specific hyperparameters for \cref{tab:main_results}.} PC-Sampler scoring hyperparameters follow~\citet{huang2025pc}.}
\begin{tabular}{lcccccc}
\toprule
Dataset & Few-shot & Gen. Len. $L$ & Steps $T$ & $\tau_p$ & top-$k$ & Interval $\Delta$ \\
\midrule
GSM8K       & 4 & 256 & 256 & 1.0 & 8 & 64 \\
MATH500    & 0 & 256 & 256 & 1.0 & 4 & 64 \\
HumanEval  & 0 & 256 & 256 & 1.0 & 6 & 64 \\
MBPP       & 0 & 128 & 128 & 1.0 & 4 & 32 \\
Countdown  & 3 & 128 & 128 & 1.0 & 4 & 32 \\
Sudoku     & 5 & 128 & 128 & 0.1 & 4 & 32 \\
\bottomrule
\end{tabular}
\vspace{2pt}
\label{tab:hyperparam-by-dataset}
\end{table}

Unless otherwise specified, we set the token sampling temperature $=0$.
For the \textbf{position scoring} used in the path proposal phase, we follow PC-Sampler and keep all its hyperparameters identical to the original paper~\citep{huang2025pc}. 

For the path generator, we use a categorical distribution over the top-$k$ candidate positions:
$P(i\mid \mathbf{x}_{\mathcal{O}})\propto \exp(s_i/\tau_p)$.
We set $\tau_p=0.1$ for Sudoku and $\tau_p=1.0$ for all other benchmarks.
We found $\texttt{top-}k=4$ works well across datasets; additionally, $\texttt{top-}k=8$ for GSM8K and $\texttt{top-}k=6$ for HumanEval yield the best results.

For the SMC framework, we resample at a fixed interval $\Delta = T/4$ (i.e., three resampling stages over $T$ steps) with resampling temperature $\tau_r = 0.1$.

\paragraph{Sensitivity to $\tau_p$ and top-$k$}
\label{appendix:sensitivity}

We report sensitivity of the path generator hyperparameters on representative benchmarks; similar trends are observed on other datasets.
All results follow the same setting as the main experiments (\cref{tab:main_results}).

\begin{table}[h]
\centering

\caption{\textbf{Sensitivity to path proposal hyperparameters.} Moderate top-$k$ ($k=4$) values are generally robust.}
\setlength{\tabcolsep}{7pt}
\begin{tabular}{lccc}
\toprule
Setting & MATH500 ($L{=}256, T{=}256$) & Countdown ($L{=}128, T{=}128$) & MBPP ($L{=}128, T{=}128$) \\
\midrule
$k=4, \tau_p=1.0$ & 41.0 & 40.8 & 47.1 \\
$k=6, \tau_p=1.0$ & 41.2 & 40.9 & 46.5 \\
$k=8, \tau_p=1.0$ & 39.8 & 40.5 & -- \\
$k=6, \tau_p=0.5$ & --   & --   & 46.7 \\
\bottomrule
\end{tabular}
\vspace{2pt}
\label{tab:sensitivity}
\end{table}

\subsection{E-SMC Hyperparameters and Controlled Comparison}
\label{appendix:esmc-hparams}

\paragraph{E-SMC (Original vs. Our Path-Generator Hyperparameters).}
In the main table, we report E-SMC results using the hyperparameters recommended in its original paper~\citep{chen2025optimizing}, which adopts a more conservative path-generation strategy (notably $\texttt{top-}k=2, \tau_p=0.1$).
To isolate the effect of the guidance signal (entropy vs.\ path-LL), we additionally evaluate E-SMC under our path-generator hyperparameters on LLaDA-8B-Instruct.

\begin{table}[h]
\centering

\setlength{\tabcolsep}{6pt}
\caption{\textbf{Controlled comparison on LLaDA-8B-Instruct.} E-SMC benefits from our less conservative path generator, but remains consistently below \ourmethod.}
\begin{tabular}{lccccccc}
\toprule
Methods & HumanEval & MBPP & GSM8K & MATH500 & Countdown & Sudoku & Avg. \\
\midrule
E-SMC (original) & 41.5 & 45.0 & 81.9 & 39.0 & 40.4 & 25.6 & 45.6 \\
E-SMC (ours hparams) & 43.2 & 45.9 & 82.2 & 37.6 & 41.8 & 26.0 & 46.1 \\
\ourmethod (\textsc{POKE}) & 45.1 & 47.1 & 84.1 & 41.0 & 40.8 & 27.8 & 47.7 \\
\bottomrule
\end{tabular}
\vspace{4pt}
\label{tab:esmc-controlled}
\end{table}

\paragraph{Discussion.}
Even with comparable exploration and identical SMC resampling mechanics, our method still consistently outperforms E-SMC.
This aligns with our correlation analysis (\cref{fig:teaser}), where path LL is substantially better correlated with final task accuracy than path entropy, thereby predicting the superior scaling behavior of \ourmethod.

\section{Details of Baselines}
\label{appendix:baseline-details}

This section provides implementation details and conceptual clarifications for the most relevant baselines considered in our experiments, with a particular focus on their guidance signals and inference-time scaling mechanisms.

\paragraph{PC-Sampler (Base Policy)}
\label{appendix:pc-sampler}

We adopt \textbf{PC-Sampler}~\citep{huang2025pc} as the base unmasking policy for all multi-particle methods, including \ourmethod, E-SMC, and Majority Voting.
PC-Sampler, recently rebranded as \emph{UnCode} in the original paper, selects the next unmasking position by combining token-level confidence with an explicit positional prior.
For consistency with prior dLLM literature, we retain the name \textbf{PC-Sampler} throughout this work.

Formally, for each masked position $i \in \mathcal{M}$, PC-Sampler assigns a score
\begin{equation}
s_i = - H\!\left(p_\theta(\cdot \mid \x_{\mathcal{O}}, i)\right) \cdot \exp(-\lambda \, i),
\end{equation}
where $H(\cdot)$ denotes the cross-entropy between the model's predictive distribution at position $i$ and a background token distribution, serving as a confidence measure. The exponential decay term introduces a positional bias that prioritizes earlier positions in the sequence. All hyperparameters and implementation details strictly follow the original paper~\citep{huang2025pc}.

\paragraph{E-SMC and LookUM: Entropy-Guided SMC Search}
\label{appendix:entropy-smc}

\textbf{E-SMC}~\citep{chen2025optimizing} and \textbf{LookUM}~\citep{lee2025lookahead} are two closely related methods that apply Sequential Monte Carlo (SMC) search to dLLM decoding.
Both methods aim to improve generation quality by maintaining multiple partial paths and periodically resampling them according to an entropy-based guidance signal. Given a partial observation $\x_{\mathcal{O}}$, both E-SMC and LookUM estimate the future uncertainty of the remaining masked tokens via the expected token entropy:
\begin{equation}
\mathcal{H}_{\text{future}}(\x_{\mathcal{O}}) =
\mathbb{E}_{i \sim \mathcal{M}}
\left[
\mathcal{H}\!\left(p_\theta(\cdot \mid \x_{\mathcal{O}}, i)\right)
\right].
\end{equation}
This quantity is then used as a surrogate value function, favoring partial paths with lower anticipated future entropy. While effective in certain reasoning tasks, it conditions solely on the current observed state $\x_{\mathcal{O}}$ and averages uncertainty independently across future positions, failing to account for long-range interactions.

\paragraph{Comparison to Our Path LL Guidance.}
In contrast, \ourmethod employs a path-level resampling weight defined as the sum of the realized log-likelihood of observed tokens and an explicit estimate of the \emph{future joint log-likelihood}: $\log p_\theta(\x_{\mathcal{O}})
+ \mathbb{E}\!\left[ \log p_\theta(\x_{\mathcal{M}} \mid \x_{\mathcal{O}}) \right]$. This formulation is principled in that it directly approximates the terminal path log-likelihood, which is a powerful optimization target of decoding.
Moreover, entropy-based methods define their guidance over the \emph{remaining masked set} $\mathcal{M}$.
Since different candidate paths may follow different unmasking orders, they can have different remaining sets, making entropy scores computed on different variable scopes, which may not be directly comparable. In contrast, our guidance targets the same terminal objective and remains comparable
across candidates (even in different decoding stages) regardless of which subset of tokens they already decode.

\paragraph{ReMDM: Remasking-Based Self-Correction}
\label{appendix:remdm}

\textbf{ReMDM}~\citep{wang2025remasking} improves dLLM decoding by introducing iterative \emph{remasking and self-correction}.
Instead of searching over multiple paths in parallel, ReMDM repeatedly revisits previously generated tokens and selectively remasks low-confidence positions, allowing the model to revise earlier decisions.

While ReMDM scales computation by increasing the number of remasking iterations, its corrections remain \emph{single-path} and are guided by local confidence heuristics rather than an explicit path-level objective. In our experiments, we set the number of remasking and correction steps equal to the generation length, corresponding to a $2\times$ computational overhead. Increasing the number of remasking rounds beyond this point does not yield further performance improvements.

\section{Experimental Details}

\subsection{Details of Illustrative Experiments}
\label{appendix:fig1-details}

\paragraph{Settings for Figure~\ref{fig:teaser} (Left)}
The results are extracted from the same experimental runs as the LLADA-8B results in Table~\ref{tab:main_results}.

\paragraph{Settings for Figure~\ref{fig:teaser} (Right)}
To investigate the relationship between sequence-level proxy scores and generation quality, we evaluate \textbf{LLADA} on the \textbf{GSM8K} dataset in a 0-shot setting. We set the generation length to 256 tokens, with a total of $T=256$ steps and a block size of 32. For each problem, we generate samples with 3 ordering mechanisms:
\begin{itemize}
    \item \textbf{Semi-AR + Random}: A semi-autoregressive order where tokens within each block of size 32 are generated in a random order.
    \item \textbf{AR}: A standard autoregressive order following a left-to-right generation.
    \item \textbf{Oracle-verified}: An order determined by a ground-truth (GT) verifier. Specifically, for each problem, we first sample 5 sequences with different orders; the verifier selects the first order that leads to a correct final answer. If none of the 5 samples are correct, we default to the first sampled order.
\end{itemize}
For each of these three orders, we sample 8 trajectories for each question in the dataset using a value temperature of 0.2 under a fixed order, resulting in $24$ samples per question.

\paragraph{Definitions of Baseline Proxy Scores}
While we propose \textbf{Path LL} as our primary metric in the main text, we provide the formal definitions for the baseline proxies used for comparison here. 
The \textbf{ELBO} (Evidence Lower Bound) is calculated based on the training objective. For a given sequence $\x^0$, we estimate the ELBO using a Monte Carlo (MC) approach by averaging the loss over $K=512$ random mask samples:
\begin{equation}
\label{eq:elbo-mc-appendix}
\text{ELBO}(\x^0) \approx \frac{1}{K} \sum_{k=1}^{K} \left[ \frac{1}{t_k} \log p_{\text{D}} ( \x_{\mathcal{M}_{t_k}} \mid \x_{\mathcal{O}_{t_k}}; \theta ) \right], \quad t_k \sim U(0,1).
\end{equation}
Additionally, we consider \textbf{Path Entropy} from~\citet{chen2025optimizing}. First, the \textbf{State Entropy} ($h_{DE}$) is defined as the average Shannon entropy of the model's predictive distributions over the set of masked positions $\mathcal{M}_t$:
\begin{equation}
h_{DE}(t) = \frac{1}{|\mathcal{M}_t|} \sum_{i \in \mathcal{M}_t} \mathcal{H}[p_\theta(x_i \mid \x_{\mathcal{O}_t})].
\end{equation}
The \textbf{Path Entropy} is the integral of $h_{DE}$ over the generation time. In our discrete-time setting with $T$ steps, it is approximated by averaging the $h_{DE}$ across all steps:
\begin{equation}
\text{Path Entropy} = \frac{1}{T} \sum_{t=1}^{T} h_{DE}(t).
\end{equation}

\paragraph{Global Pooling Visualization}
In the main text, we focus on the per-question average accuracy curves. Here, we provide an additional version using \textbf{Global Pooling}. In this setting, rather than averaging curves across individual questions, we pool all samples from all questions together. We then rank this global pool of samples by their respective proxy scores and calculate the accuracy for each top-$k\%$ quantile. For both settings, we only consider questions whose accuracies fall between 10\% and 90\%, excluding trivial and unsolvable cases.

\begin{figure}[h]
    \centering
    \includegraphics[width=0.4\linewidth]{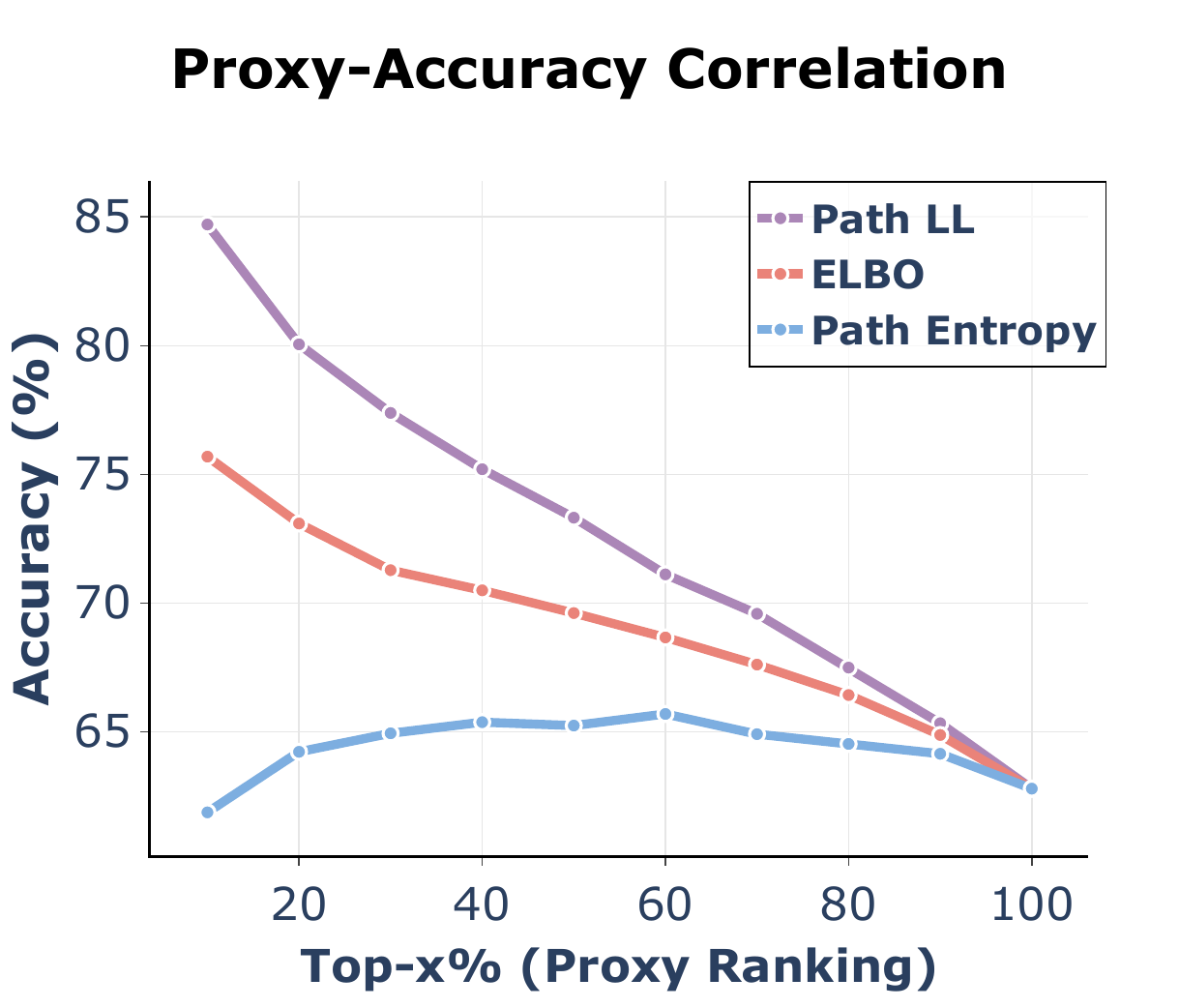} 
    \caption{Comparison of Accuracy Curves (via Global Pooling) across Proxy Scores on GSM8K.}
    \label{fig:sequence-proxy-appendix}
\end{figure}

As illustrated in Figure~\ref{fig:sequence-proxy-appendix}, the trends observed in the global pooling curves are highly consistent with the per-question analysis. This indicates that the effectiveness of our proposed Path LL as a selection metric is robust.

\subsection{Details of \cref{fig:efficiency_tradeoff}}
\label{appendix:fig3-details}

This section provides the detailed experimental settings and numerical results underlying the performance--efficiency trade-off curves shown in \cref{fig:efficiency_tradeoff}.
We report accuracy versus per-sample latency on three representative benchmarks: GSM8K, MATH500, and Countdown. Latency is measured on a single NVIDIA A800 GPU, with prompt lengths fixed to the dataset median.

\paragraph{Numerical Results.}
Tables~\ref{tab:eff-math500}, \ref{tab:eff-countdown}, and \ref{tab:eff-gsm8k} list the exact accuracy--latency pairs used to generate the curves in \cref{fig:efficiency_tradeoff}.

\begin{table}[h]
\centering
\setlength{\tabcolsep}{8pt}
\begin{tabular}{lcc}
\toprule
Method & Latency (s) & Accuracy (\%) \\
\midrule
PC-Sampler & 18.1 & 79.3 \\
Majority Voting & 25.7 / 51.2 / 77.2 & 80.0 / 81.3 / 81.4 \\
E-SMC & 26.2 / 52.0 / 77.8 & 80.3 / 82.6 / 81.9 \\
\ourmethod & 22.5 / 39.0 / 76.4 & 81.5 / 81.8 / 84.1 \\
\bottomrule
\end{tabular}
\caption{Accuracy--latency pairs on GSM8K used in \cref{fig:efficiency_tradeoff}.}
\label{tab:eff-gsm8k}
\end{table}

\begin{table}[h]
\centering
\setlength{\tabcolsep}{8pt}
\begin{tabular}{lcc}
\toprule
Method & Latency (s) & Accuracy (\%) \\
\midrule
PC-Sampler & 10.5 & 38.4 \\
Majority Voting & 13.3 / 22.0 / 43.9 & 38.6 / 36.8 / 40.0 \\
E-SMC & 13.3 / 22.0 / 43.9 & 37.2 / 35.8 / 39.0 \\
\ourmethod & 9.5 / 18.0 / 38.1 & 37.6 / 40.6 / 41.4 \\
\bottomrule
\end{tabular}
\caption{Accuracy--latency pairs on MATH500 used in \cref{fig:efficiency_tradeoff}.}
\label{tab:eff-math500}
\end{table}

\begin{table}[h]
\centering
\setlength{\tabcolsep}{8pt}
\begin{tabular}{lcc}
\toprule
Method & Latency (s) & Accuracy (\%) \\
\midrule
PC-Sampler & 7.9 & 36.1 \\
Majority Voting & 9.5 / 18.0 / 32.6 & 34.0 / 36.0 / 36.9 \\
E-SMC & 9.5 / 18.0 / 32.6 & 35.4 / 38.3 / 39.8 \\
\ourmethod & 8.9 / 22.1 / 34.7 & 36.4 / 42.3 / 40.8 \\
\bottomrule
\end{tabular}
\caption{Accuracy--latency pairs on Countdown used in \cref{fig:efficiency_tradeoff}.}
\label{tab:eff-countdown}
\end{table}

\paragraph{Budget Control.}
We control inference-time budgets by adjusting the number of particles $M$ and the number of tokens unmasked per step.
For \ourmethod, we use $M{=}4$ for 1token-per-step decoding, and $M{=}\{2,4\}$ for 2tokens-per-step decoding.
For Majority Voting and E-SMC, we use $M{=}5$ for 1token-per-step decoding, and $M{=}\{3,5\}$ for 2tokens-per-step decoding.
These configurations are chosen to yield comparable wall-clock latency across methods at each budget level.

\section{Theoretical Results}
\label{appendix:method-proof}

\paragraph{Proof of Equation \ref{eq:joint-decomposition}} Following Definition \ref{def:TC}, we expand the KL divergence and let $q(\x_{\mathcal{M}} \mid \x_{\mathcal{O}}) := \prod_{i \in \mathcal{M}} p(x_i \mid \x_{\mathcal{O}})$, we obtain the key value decomposition :

\begin{align*}
    \mathrm{TC}(\X_{\mathcal{M}} \mid \x_{\mathcal{O}}) 
    &= D_{\mathrm{KL}}\Big( p(\x_{\mathcal{M}} \mid \x_{\mathcal{O}}) \,\Big\|\, q(\x_{\mathcal{M}} \mid \x_{\mathcal{O}}) \Big) \\
    &= \mathbb{E}_{p(\x_{\mathcal{M}} \mid \x_{\mathcal{O}})} \left[ \log \frac{p(\x_{\mathcal{M}} \mid \x_{\mathcal{O}})}{q(\x_{\mathcal{M}} \mid \x_{\mathcal{O}})} \right] \\
    &= \mathbb{E}_{p} [\log p(\x_{\mathcal{M}} \mid \x_{\mathcal{O}})] - \mathbb{E}_{p} [\log q(\x_{\mathcal{M}} \mid \x_{\mathcal{O}})] \\
    \mathbb{E}_{p} [\log p(\x_{\mathcal{M}} \mid \x_{\mathcal{O}})] 
    &= \mathbb{E}_{p} [\log q(\x_{\mathcal{M}} \mid \x_{\mathcal{O}})] + \mathrm{TC}(\X_{\mathcal{M}} \mid \x_{\mathcal{O}}) \\
    \underbrace{\mathbb{E}_p [\log p(\x_{\mathcal{M}} \mid \x_{\mathcal{O}})]}_{V(\x_{\mathcal{O}})} 
    &= \mathbb{E}_p \left[\underbrace{\log \prod_{i \in \mathcal{M}} p(x_i \mid \x_{\mathcal{O}})}_{\text{Product LL}}\right] + \mathrm{TC}(\X_{\mathcal{M}} \mid \x_{\mathcal{O}}),
\end{align*}
and the last line is exactly Equation \ref{eq:joint-decomposition}.

\subsection{Proof of Lemma \ref{lem:cum-tc-bound}}
The proof of Lemma \ref{lem:cum-tc-bound} follows from \cite{anari2025parallel} and is included here for completeness. 

We first introduce the definitiion of mutual information. 

\begin{definition}[Mutual information]
    Let $\X, \Y$ be random variables. The mutual information between is defined as 
    \begin{align*}
        \mathcal{I}(\X;\Y)=D_{KL}\left(p(\X,\Y) \Big\|\ p(\X) \times p(\Y)
        \right)
    \end{align*}
\end{definition}

Using the above definition, we can introduce the "potential function" as follows:

\begin{definition}[Potential Function]
For masked variables $\X_{\mathcal{M}}$, the potential function is defined as:\begin{equation*}\phi(\X_{\mathcal{M}} \mid \x_{\mathcal{O}}) = \sum_{i \in \mathcal{M}} I(\X_i ; \X_{\mathcal{M} \setminus {i}} \mid \x_{\mathcal{O}}),
\end{equation*}
\end{definition}

where $I(\cdot ; \cdot \mid \x_{\mathcal{O}})$ denotes the conditional mutual information given the partial context.

\begin{proposition} \label{prop:potential bound} The potential function provides a principled upper bound for the total correlation, in particular 

$$\mathrm{TC}(\X_{\mathcal{M}} \mid \x_\mathcal{O}) \leq \phi(\X_\mathcal{M}\mid \x_{\mathcal{O}})
$$
\end{proposition}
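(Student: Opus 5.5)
Fix an arbitrary ordering of $\mathcal{M}$, say $\mathcal{M}=\{1,\dots,n\}$, and suppress the conditioning on $\x_{\mathcal{O}}$ throughout. The plan is to write both $\mathrm{TC}$ and $\phi$ in terms of entropies and compare them term by term.

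First, expand the KL divergence in Definition~\ref{def:TC}. This gives the standard identity
\[
\mathrm{TC}(\X_{\mathcal{M}}) = \sum_{i=1}^n \mathcal{H}(X_i) - \mathcal{H}(\X_{\mathcal{M}}).
\]
Next, apply the chain rule $\mathcal{H}(\X_{\mathcal{M}})=\sum_{i=1}^n \mathcal{H}(X_i\mid \X_{1:i-1})$. Since each $\mathcal{H}(X_i)-\mathcal{H}(X_i\mid \X_{1:i-1})$ is a mutual information, this rewrites the total correlation as a telescoping sum:
\[
\mathrm{TC}(\X_{\mathcal{M}})=\sum_{i=1}^n I(X_i;\X_{1:i-1}).
\]

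Now compare each summand with the corresponding term of $\phi$. Because $\{1,\dots,i-1\}\subseteq \mathcal{M}\setminus\{i\}$, monotonicity of mutual information gives
\[
I(X_i;\X_{1:i-1}) \le I(X_i;\X_{\mathcal{M}\setminus\{i\}}).
\]
Monotonicity follows from the chain rule $I(X_i;\X_{A\cup B}) = I(X_i;\X_A)+I(X_i;\X_B\mid \X_A)$ together with nonnegativity of conditional mutual information, which is itself an expected KL divergence. Summing over $i$ then yields $\mathrm{TC}\le\phi$.

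No step is a real obstacle. The only point needing care is that all quantities are taken under the same joint $p(\cdot\mid\x_{\mathcal{O}})$, so that the marginals appearing in $\mathrm{TC}$ are the marginals of that joint and the entropy identity above holds exactly. The argument also works for any ordering, so averaging over $\sigma$ is unnecessary; in fact the bound is loose by exactly $\sum_i I(X_i;\X_{i+1:n}\mid \X_{1:i-1})$.
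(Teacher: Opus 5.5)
Your proof is correct, and it takes a genuinely different route from the paper's.

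The paper expands $\phi$ rather than $\mathrm{TC}$: it writes $\phi=\sum_i \mathcal{H}(X_i)-|\mathcal{M}|\,\mathcal{H}(X_{\mathcal{M}})+\sum_i\mathcal{H}(X_{\mathcal{M}\setminus\{i\}})$. This reduces $\mathrm{TC}\le\phi$ to the symmetric entropy inequality $(n-1)\,\mathcal{H}(X_{\mathcal{M}})\le\sum_{i}\mathcal{H}(X_{\mathcal{M}\setminus\{i\}})$, which is Han's inequality for $(n-1)$-subsets. The paper proves that inequality by induction on $n$: condition on one variable $X_i$, apply the hypothesis to the remaining $n-1$ variables, sum over $i$, and close with subadditivity.

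You instead fix an ordering and use the chain rule to write $\mathrm{TC}=\sum_i \mathcal{I}(X_i;X_{1:i-1})$. You then compare term by term with $\mathcal{I}(X_i;X_{\mathcal{M}\setminus\{i\}})$ using monotonicity of mutual information. This is in effect the standard direct proof of that same Han-type inequality (chain rule plus ``conditioning reduces entropy''). It avoids the induction over $n$ and the conditioning-on-$X_i$ bookkeeping.

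Your route also gives an exact expression for the slack: $\phi-\mathrm{TC}=\sum_i\mathcal{I}(X_i;X_{i+1:n}\mid X_{1:i-1})=\sum_i\mathcal{H}(X_{\mathcal{M}\setminus\{i\}})-(n-1)\,\mathcal{H}(X_{\mathcal{M}})$. This is order-independent, as it must be, and it makes tightness transparent; for example, $n=2$ gives $\phi=2\,\mathrm{TC}$. The paper's formulation, in exchange, isolates an ordering-free inequality on joint entropies that stands on its own.

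Your caveat that the marginals in $\mathrm{TC}$ must be those of the same joint $p(\cdot\mid x_{\mathcal{O}})$ is apt. The paper's proof relies on the same identity $\mathrm{TC}=\sum_i\mathcal{H}(X_i)-\mathcal{H}(X_{\mathcal{M}})$ without stating it.
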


\begin{proof}
We first note that on finite spaces, both total correlation and mutual information admit simple entropy-based representations. In particular,
\[
\mathrm{TC}(X_1,\dots,X_n)
= \sum_{i=1}^n \mathcal{H}(X_i) - \mathcal{H}(X_1,\dots,X_n),
\]
while for two random variables $X$ and $Y$, the mutual information can be written as
\[
\mathcal{I}(X;Y)
= \mathcal{H}(X) - \mathcal{H}(X \mid Y)
= \mathcal{H}(X) + \mathcal{H}(Y) - \mathcal{H}(X,Y).
\]
Consequently, mutual information is a special case of total correlation with two random variables, since
\[
\mathrm{TC}(X,Y) = \mathcal{I}(X;Y).
\]

To prove that $\mathrm{TC}(\X_{\mathcal{M}} \mid \x_{\mathcal{O}}) \leq \phi(\X_{\mathcal{M}} \mid \x_{\mathcal{O}})$, we expand the potential function using the entropy-based definition of conditional mutual information:
\begin{align*}
    \phi(\X_{\mathcal{M}} \mid \x_{\mathcal{O}}) 
    &= \sum_{i \in \mathcal{M}} \mathcal{I}(X_i ; \X_{\mathcal{M} \setminus \{i\}} \mid \x_{\mathcal{O}}) \\
    &= \sum_{i \in \mathcal{M}} \left[ \mathcal{H}(X_i \mid \x_{\mathcal{O}}) - \mathcal{H}(X_i \mid \X_{\mathcal{M} \setminus \{i\}}, \x_{\mathcal{O}}) \right].
\end{align*}
Using the chain rule of entropy identity $\mathcal{H}(X_i \mid \X_{\mathcal{M} \setminus \{i\}}, \x_{\mathcal{O}}) = \mathcal{H}(\X_{\mathcal{M}} \mid \x_{\mathcal{O}}) - \mathcal{H}(\X_{\mathcal{M} \setminus \{i\}} \mid \x_{\mathcal{O}})$, we rewrite the sum:
\begin{align}
    \phi(\X_{\mathcal{M}} \mid \x_{\mathcal{O}}) 
    &= \sum_{i \in \mathcal{M}} \mathcal{H}(X_i \mid \x_{\mathcal{O}}) - \sum_{i \in \mathcal{M}} \left[ \mathcal{H}(\X_{\mathcal{M}} \mid \x_{\mathcal{O}}) - \mathcal{H}(\X_{\mathcal{M} \setminus \{i\}} \mid \x_{\mathcal{O}}) \right] \nonumber \\
    &= \sum_{i \in \mathcal{M}} \mathcal{H}(X_i \mid \x_{\mathcal{O}}) - |\mathcal{M}| \mathcal{H}(\X_{\mathcal{M}} \mid \x_{\mathcal{O}}) + \sum_{i \in \mathcal{M}} \mathcal{H}(\X_{\mathcal{M} \setminus \{i\}} \mid \x_{\mathcal{O}}).
    \label{eq:phi-expanded}
\end{align}

Recall the definition of Total Correlation:
\begin{equation}
    \mathrm{TC}(\X_{\mathcal{M}} \mid \x_{\mathcal{O}}) = \sum_{i \in \mathcal{M}} \mathcal{H}(X_i \mid \x_{\mathcal{O}}) - \mathcal{H}(\X_{\mathcal{M}} \mid \x_{\mathcal{O}}).
    \label{eq:tc-def-proof}
\end{equation}

Subtracting \eqref{eq:tc-def-proof} from \eqref{eq:phi-expanded}, we see that $\phi(\X_{\mathcal{M}} \mid \x_{\mathcal{O}}) \geq \mathrm{TC}(\X_{\mathcal{M}} \mid \x_{\mathcal{O}})$ if and only if:
\begin{equation*}
    \sum_{i \in \mathcal{M}} \mathcal{H}(\X_{\mathcal{M} \setminus \{i\}} \mid \x_{\mathcal{O}}) - (|\mathcal{M}| - 1) \mathcal{H}(\X_{\mathcal{M}} \mid \x_{\mathcal{O}}) \geq 0.
\end{equation*}
Let $n = |\mathcal{M}|$. We must prove the inequality:
\begin{equation*}
    (n - 1) \mathcal{H}(\X_{\mathcal{M}} \mid \x_{\mathcal{O}}) \leq \sum_{i \in \mathcal{M}} \mathcal{H}(\X_{\mathcal{M} \setminus \{i\}} \mid \x_{\mathcal{O}}).
\end{equation*}
We prove this by induction on $n$ for $n \geq 2$.

\textbf{Base Case ($n=2$):}
For $\mathcal{M} = \{1, 2\}$, the inequality is $\mathcal{H}(X_1, X_2 \mid \x_{\mathcal{O}}) \leq \mathcal{H}(X_2 \mid \x_{\mathcal{O}}) + \mathcal{H}(X_1 \mid \x_{\mathcal{O}})$.
This is equivalent to $\mathcal{H}(X_1 \mid \x_{\mathcal{O}}) + \mathcal{H}(X_2 \mid \x_{\mathcal{O}}) - \mathcal{H}(X_1, X_2 \mid \x_{\mathcal{O}}) = \mathcal{I}(X_1; X_2 \mid \x_{\mathcal{O}}) \geq 0$, which is true by definition of mutual information.

\textbf{Inductive Step:}
Suppose the hypothesis is true for $n - 1 \geq 2$. Fix an arbitrary $i \in \mathcal{M}$. For $j \neq i$, let $Y_j = (X_j \mid X_i, \x_{\mathcal{O}})$.
Applying the induction hypothesis to $\{Y_j \mid j \neq i\}$ gives:
\begin{equation*}
    (n - 2) \mathcal{H}(\X_{\mathcal{M} \setminus \{i\}} \mid X_i, \x_{\mathcal{O}}) \leq \sum_{j \neq i} \mathcal{H}(\X_{\mathcal{M} \setminus \{i, j\}} \mid X_i, \x_{\mathcal{O}}).
\end{equation*}
Using the identity $\mathcal{H}(U \mid V) = \mathcal{H}(U, V) - \mathcal{H}(V)$, this implies:
\begin{equation*}
    (n - 2) \left[ \mathcal{H}(\X_{\mathcal{M}} \mid \x_{\mathcal{O}}) - \mathcal{H}(X_i \mid \x_{\mathcal{O}}) \right] \leq \sum_{j \neq i} \left[ \mathcal{H}(\X_{\mathcal{M} \setminus \{j\}} \mid \x_{\mathcal{O}}) - \mathcal{H}(X_i \mid \x_{\mathcal{O}}) \right].
\end{equation*}
Rearranging terms yields:
\begin{equation*}
    (n - 2) \mathcal{H}(\X_{\mathcal{M}} \mid \x_{\mathcal{O}}) + \mathcal{H}(X_i \mid \x_{\mathcal{O}}) \leq \sum_{j \neq i} \mathcal{H}(\X_{\mathcal{M} \setminus \{j\}} \mid \x_{\mathcal{O}}).
\end{equation*}
Summing these inequalities over all $i \in \mathcal{M}$:
\begin{equation*}
    n(n-2) \mathcal{H}(\X_{\mathcal{M}} \mid \x_{\mathcal{O}}) + \sum_{i \in \mathcal{M}} \mathcal{H}(X_i \mid \x_{\mathcal{O}}) \leq (n-1) \sum_{j \in \mathcal{M}} \mathcal{H}(\X_{\mathcal{M} \setminus \{j\}} \mid \x_{\mathcal{O}}).
\end{equation*}
Using the fact that $\sum_{i=1}^n \mathcal{H}(X_i \mid \x_{\mathcal{O}}) \geq \mathcal{H}(\X_{\mathcal{M}} \mid \x_{\mathcal{O}})$ (subadditivity of entropy), we have:
\begin{align*}
    (n-1) \sum_{j \in \mathcal{M}} \mathcal{H}(\X_{\mathcal{M} \setminus \{j\}} \mid \x_{\mathcal{O}}) 
    &\geq n(n-2) \mathcal{H}(\X_{\mathcal{M}} \mid \x_{\mathcal{O}}) + \mathcal{H}(\X_{\mathcal{M}} \mid \x_{\mathcal{O}}) \\
    &= (n-1)^2 \mathcal{H}(\X_{\mathcal{M}} \mid \x_{\mathcal{O}}).
\end{align*}
Dividing by $(n-1)$ confirms the inequality, thereby concluding the proof.
\end{proof}

To further ensure computational efficiency without requiring additional model queries, we further simplify this into a bound based on individual token entropies:

\begin{proposition}\label{potential entropy bound}
    \begin{equation*}
\label{eq:bounds}
\mathrm{TC}(\X_{\mathcal{M}} \mid \x_{\mathcal{O}}) \leq \phi(\X_{\mathcal{M}} \mid \x_{\mathcal{O}}) \leq \sum_{i \in \mathcal{M}} \mathcal{H}(\X_i \mid \x_{\mathcal{O}}),
\end{equation*}
\end{proposition}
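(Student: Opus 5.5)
The statement to prove is Proposition~\ref{potential entropy bound}. Its first inequality, $\mathrm{TC}(\X_{\mathcal{M}} \mid \x_{\mathcal{O}}) \le \phi(\X_{\mathcal{M}} \mid \x_{\mathcal{O}})$, is exactly Proposition~\ref{prop:potential bound}, which I will invoke directly. So the only new content is the second inequality:
\[
\phi(\X_{\mathcal{M}} \mid \x_{\mathcal{O}}) \le \sum_{i \in \mathcal{M}} \mathcal{H}(X_i \mid \x_{\mathcal{O}}).
\]
I plan to prove it termwise.

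For each $i \in \mathcal{M}$, I will use the entropy form of conditional mutual information already written out in the proof of Proposition~\ref{prop:potential bound}:
\[
\mathcal{I}(X_i ; \X_{\mathcal{M}\setminus\{i\}} \mid \x_{\mathcal{O}}) = \mathcal{H}(X_i \mid \x_{\mathcal{O}}) - \mathcal{H}(X_i \mid \X_{\mathcal{M}\setminus\{i\}}, \x_{\mathcal{O}}).
\]
For discrete (categorical) variables, conditional Shannon entropy is nonnegative, because it is an average of entropies of finite distributions. The subtracted term is therefore $\ge 0$, which gives $\mathcal{I}(X_i ; \X_{\mathcal{M}\setminus\{i\}} \mid \x_{\mathcal{O}}) \le \mathcal{H}(X_i \mid \x_{\mathcal{O}})$. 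Summing over $i \in \mathcal{M}$ yields the claim, and chaining with Proposition~\ref{prop:potential bound} gives the full statement.

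There is no real obstacle here; the only point needing care is consistency of distributions. The marginal entropies $\mathcal{H}(X_i \mid \x_{\mathcal{O}})$ must be taken under the same joint $p_\theta(\X_{\mathcal{M}} \mid \x_{\mathcal{O}})$ of Eq.~\ref{eq:p-joint-def} that defines $\mathrm{TC}$ and $\phi$. They then coincide with the model marginals used by POKE, since conditioning on $\x_{\mathcal{O}}$ and marginalizing the mixture over orders returns $p_\theta(X_i \mid \x_{\mathcal{O}})$ for each $i$ (any order first reveals $X_i$ given only $\x_{\mathcal{O}}$ when $i$ is first, and marginal consistency of the model is assumed). I will state this assumption explicitly.

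This proposition is only the single-stage bound. To get the $1/K$ factor in Lemma~\ref{lem:cum-tc-bound}, the remaining plan is to apply the pinning argument of \citet{anari2025parallel}. At stage $k$, the group $\mathcal{A}_k$ is a uniformly random subset of the still-masked set. The expected potential restricted to $\mathcal{A}_k$ is then at most roughly $\frac{|\mathcal{A}|}{|\mathcal{M}_k|}$ times the potential decrease caused by pinning, because each pairwise dependence lands inside the same group with proportionally small probability. Telescoping the potential across the $K$ stages and bounding the initial potential by $\sum_i \mathcal{H}(X_i \mid \x_{\mathcal{O}})$, via this proposition, gives the $1/K$ factor.

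That telescoping step, which controls how $\phi$ decreases in expectation under random pinning, is where I expect the real difficulty to lie.
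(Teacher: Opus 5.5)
Your proof is correct and matches the paper's. The first inequality is Proposition~\ref{prop:potential bound}, and the second follows termwise by writing $\mathcal{I}(X_i ; \X_{\mathcal{M}\setminus\{i\}} \mid \x_{\mathcal{O}}) = \mathcal{H}(X_i \mid \x_{\mathcal{O}}) - \mathcal{H}(X_i \mid \X_{\mathcal{M}\setminus\{i\}}, \x_{\mathcal{O}})$, dropping the nonnegative conditional entropy, and summing over $i \in \mathcal{M}$. Your caveat about marginal consistency is a fair point the paper leaves implicit, but both it and your pinning/telescoping sketch concern Lemma~\ref{lem:cum-tc-bound}, not this proposition.
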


\begin{proof}
We aim to show that the potential function is bounded by the sum of marginal conditional entropies:
\begin{equation*}
    \phi(\X_{\mathcal{M}} \mid \x_{\mathcal{O}}) \leq \sum_{i \in \mathcal{M}} \mathcal{H}(X_i \mid \x_{\mathcal{O}}).
\end{equation*}

Recall the definition of the potential function in terms of conditional mutual information:
\begin{equation*}
    \phi(\X_{\mathcal{M}} \mid \x_{\mathcal{O}}) = \sum_{i \in \mathcal{M}} \mathcal{I}(X_i ; \X_{\mathcal{M} \setminus \{i\}} \mid \x_{\mathcal{O}}).
\end{equation*}

Using the entropy-based expansion of mutual information, $\mathcal{I}(A; B \mid C) = \mathcal{H}(A \mid C) - \mathcal{H}(A \mid B, C)$, we rewrite the potential as:
\begin{equation*}
    \phi(\X_{\mathcal{M}} \mid \x_{\mathcal{O}}) = \sum_{i \in \mathcal{M}} \left[ \mathcal{H}(X_i \mid \x_{\mathcal{O}}) - \mathcal{H}(X_i \mid \X_{\mathcal{M} \setminus \{i\}}, \x_{\mathcal{O}}) \right].
\end{equation*}

Separating the sum, we obtain:
\begin{equation*}
    \phi(\X_{\mathcal{M}} \mid \x_{\mathcal{O}}) = \sum_{i \in \mathcal{M}} \mathcal{H}(X_i \mid \x_{\mathcal{O}}) - \underbrace{\sum_{i \in \mathcal{M}} \mathcal{H}(X_i \mid \X_{\mathcal{M} \setminus \{i\}}, \x_{\mathcal{O}})}_{\geq 0}.
\end{equation*}

Since discrete conditional entropy is always non-negative ($\mathcal{H}(\cdot \mid \cdot) \geq 0$), removing the second term strictly increases (or maintains) the value. Therefore:
\begin{equation*}
    \phi(\X_{\mathcal{M}} \mid \x_{\mathcal{O}}) \leq \sum_{i \in \mathcal{M}} \mathcal{H}(X_i \mid \x_{\mathcal{O}}).
\end{equation*}
\end{proof}

\paragraph{Proof of Lemma \ref{lem:cum-tc-bound} (Cumulative Total Correlation Bound)}

We aim to bound the expected cumulative total correlation:
\begin{equation*}
    \mathbb{E}_{\sigma} \left[ \sum_{k=1}^K \mathrm{TC}(\X_{A_k} \mid \X_{\mathcal{O}_k}) \right] \leq \frac{1}{K} \sum_{i \in \mathcal{M}} \mathcal{H}(X_i \mid \x_{\mathcal{O}}).
\end{equation*}

\textbf{Step 1: Relating Total Correlation to the Potential Function} \\
From Proposition \ref{prop:potential bound} , we established that for any set of variables and context, the total correlation is upper-bounded by the potential function $\phi$. Applying this to each block $A_k$ conditioned on its specific context $\X_{\mathcal{O}_k}$ (which includes the fixed context $\x_{\mathcal{O}}$ and previous blocks):
\begin{equation*}
    \mathrm{TC}(\X_{A_k} \mid \X_{\mathcal{O}_k}) \leq \phi(\X_{A_k} \mid \X_{\mathcal{O}_k}).
\end{equation*}
Summing over $k$ and taking the expectation over $\sigma$:
\begin{equation}
    \mathbb{E}_{\sigma} \left[ \sum_{k=1}^K \mathrm{TC}(\X_{A_k} \mid \X_{\mathcal{O}_k}) \right] \leq \mathbb{E}_{\sigma} \left[ \sum_{k=1}^K \phi(\X_{A_k} \mid \X_{\mathcal{O}_k}) \right].
    \label{eq:step1-bound}
\end{equation}

\textbf{Step 2: Applying the New Pinning Lemma} \\
We invoke the New Pinning Lemma from \cite{anari2025parallel} (Lemma 13 in the original text), adapted to our notation.

\begin{lemma}[New Pinning Lemma \cite{anari2025parallel}]
\label{lem:pinning}
Let $\sigma$ be a random permutation of the masked indices $\mathcal{M}$ partitioned into $K$ blocks $A_1, \dots, A_K$. Let $\X_{\mathcal{O}_k} = \X_{A_1 \cup \dots \cup A_{k-1}} \cup \x_{\mathcal{O}}$. Then:
\begin{equation*}
    \mathbb{E}_{\sigma} \left[ \sum_{k=1}^K \phi(\X_{A_k} \mid \X_{\mathcal{O}_k}) \right] \leq \frac{1}{K} \phi(\X_{\mathcal{M}} \mid \x_{\mathcal{O}}).
\end{equation*}
\end{lemma}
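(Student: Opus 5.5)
The plan is to reduce everything to a single scalar sequence, the average joint entropy of random subsets, and then use its concavity (Han's inequality). Throughout, condition implicitly on the fixed context $\x_{\mathcal{O}}$ and write $n=|\mathcal{M}|$ and $b=n/K$ for the block size. Define
\[
g(t) \coloneqq \mathbb{E}_{S}\big[\mathcal{H}(\X_S \mid \x_{\mathcal{O}})\big], \qquad S \text{ uniform among subsets of } \mathcal{M} \text{ with } |S|=t,
\]
and let $\Delta_t \coloneqq g(t)-g(t-1)$ for $1\le t\le n$. The key structural fact is that $\Delta_t$ is non-increasing in $t$. This is the averaged form of submodularity of entropy, i.e.\ Han's inequality: $\Delta_t$ equals the expected value of $\mathcal{H}(X_j \mid \X_S)$ for a random $|S|=t-1$ and a random $j\notin S$, and enlarging the conditioning set can only decrease entropy. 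I would prove this by coupling a random set of size $t-1$ inside a random set of size $t$.

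Next I express the potential via first differences of $g$. Using $\mathcal{I}(X_i;\X_{R\setminus\{i\}}\mid C)=\mathcal{H}(X_i\mid C)+\mathcal{H}(\X_{R\setminus\{i\}}\mid C)-\mathcal{H}(\X_R\mid C)$ and summing over $i\in R$ gives
\[
\phi(\X_R\mid C)=\sum_{i\in R}\big[\mathcal{H}(X_i\mid C)-\big(\mathcal{H}(\X_R\mid C)-\mathcal{H}(\X_{R\setminus\{i\}}\mid C)\big)\big].
\]
For $R=\mathcal{M}$ and $C=\x_{\mathcal{O}}$ this yields $\phi(\X_{\mathcal{M}}\mid\x_{\mathcal{O}})=n(\Delta_1-\Delta_n)$. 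For block $k$, take $C=\X_{\mathcal{O}_k}$, where $\mathcal{O}_k\setminus\mathcal{O}$ is the uniformly random set $A_1\cup\dots\cup A_{k-1}$ of size $(k-1)b$, and $R=A_k$, a uniformly random $b$-subset of the remainder. I then rewrite every conditional entropy as $\mathcal{H}(\X_{U}\mid C)=\mathcal{H}(\X_{U\cup(\mathcal{O}_k\setminus\mathcal{O})})-\mathcal{H}(\X_{\mathcal{O}_k\setminus\mathcal{O}})$. Exchangeability of a uniform permutation then shows that the resulting unions are uniform subsets of the appropriate sizes. This gives
\[
\mathbb{E}_\sigma\big[\phi(\X_{A_k}\mid\X_{\mathcal{O}_k})\big]=b\big(\Delta_{(k-1)b+1}-\Delta_{kb}\big).
\]
Here the marginal terms contribute $b\,\Delta_{(k-1)b+1}$, and the ``leave-one-out'' terms contribute $b\,\Delta_{kb}$.

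Summing over $k$ gives $b\sum_{k=1}^K(\Delta_{(k-1)b+1}-\Delta_{kb})$. The index intervals $[(k-1)b+1,\,kb]$ are disjoint and ordered. Since $\Delta$ is non-increasing, each gap $\Delta_{kb}-\Delta_{kb+1}\ge 0$, so the sum telescopes to at most $\Delta_1-\Delta_n$. The total is therefore at most $b(\Delta_1-\Delta_n)=\frac{n}{K}(\Delta_1-\Delta_n)=\frac{1}{K}\phi(\X_{\mathcal{M}}\mid\x_{\mathcal{O}})$, which is the claim.

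The main obstacle I expect is the exchangeability bookkeeping in the block identity. One must check carefully that averaging the conditional quantities (conditioned on random variables $\X_{\mathcal{O}_k}$, not on fixed values) over $\sigma$ really collapses to differences of the unconditional averages $g(\cdot)$ at the stated indices. The leave-one-out term is the delicate one: $A_k\setminus\{i\}$ together with the earlier blocks must be a uniform set of size $kb-1$. I would verify this by writing the expectation explicitly over a uniformly random ordering and noting that the prefix of length $kb$ with one uniformly chosen element removed is itself a uniform $(kb-1)$-set. I would also handle the non-divisible case $b\notin\mathbb{Z}$ separately, or assume divisibility as the paper does.
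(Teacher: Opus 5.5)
Your proof is correct. It differs from the paper only in that the paper gives no argument for this lemma: it restates it and defers to Anari et al. You instead derive it from scratch using only Han's inequality, namely that the increments $\Delta_t$ of the average subset entropy $g(t)$ are non-increasing.

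The steps you flagged as delicate do go through:
\begin{itemize}
\item $\phi(\X_{\mathcal{M}}\mid\x_{\mathcal{O}})=n(\Delta_1-\Delta_n)$, because $g(0)=0$.
\item Conditional on $T=A_1\cup\dots\cup A_k$, an index $i$ drawn uniformly from $A_k$ is uniform on $T$. So $T\setminus\{i\}$ is a uniform $(kb-1)$-set, and the block identity holds.
\item The sum over blocks equals exactly $b(\Delta_1-\Delta_n)-b\sum_{k=1}^{K-1}(\Delta_{kb}-\Delta_{kb+1})$. The slack is therefore the curvature of $g$ at the block boundaries.
\end{itemize}

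The citation keeps the appendix short. Your version buys transparency and a little more:
\begin{itemize}
\item It makes explicit how $\phi(\X_{A_k}\mid\X_{\mathcal{O}_k})$ must be read: as mutual information averaged over $\X_{\mathcal{O}_k}$ drawn from the same joint law as $\X_{\mathcal{M}}$. That reading is what licenses $\mathcal{H}(\X_U\mid\X_{\mathcal{O}_k})=\mathcal{H}(\X_{U\cup(\mathcal{O}_k\setminus\mathcal{O})})-\mathcal{H}(\X_{\mathcal{O}_k\setminus\mathcal{O}})$. It is not an average over the product-of-marginals rollouts that POKE actually simulates.
\item The same bookkeeping, applied to $\mathrm{TC}$ instead of $\phi$, proves Lemma~\ref{lem:cum-tc-bound} directly, bypassing $\mathrm{TC}\le\phi\le\sum_i\mathcal{H}(X_i\mid\x_{\mathcal{O}})$. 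One gets $\mathbb{E}_\sigma[\mathrm{TC}(\X_{A_k}\mid\X_{\mathcal{O}_k})]=\sum_{s=(k-1)b+1}^{kb}(\Delta_{(k-1)b+1}-\Delta_s)\le(b-1)(\Delta_{(k-1)b+1}-\Delta_{kb})$. Summing gives at most $(b-1)\Delta_1=(\frac1K-\frac1n)\sum_{i\in\mathcal{M}}\mathcal{H}(X_i\mid\x_{\mathcal{O}})$, which is slightly sharper than the paper's constant.
\end{itemize}

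One correction to your closing remark: the non-divisible case cannot be ``handled separately'' with the same constant. With block sizes $b_k$ and prefix sums $s_k$, your identity gives $\sum_k b_k(\Delta_{s_{k-1}+1}-\Delta_{s_k})\le(\max_k b_k)(\Delta_1-\Delta_n)$, and the constant $1/K$ can genuinely fail. Take $X_1=X_2=X_3$ a single uniform bit, $K=2$, $|A_1|=2$ and $|A_2|=1$. Then the left-hand side equals $2$, while $\frac1K\phi(\X_{\mathcal{M}}\mid\x_{\mathcal{O}})=\frac32$. So equal block sizes are a real hypothesis of the lemma, which the paper assumes. Without them, $1/K$ must be replaced by $\max_k|A_k|/n$.
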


\noindent\emph{Proof.} We omit the proof and refer the reader to \cite{anari2025parallel}.

Applying Lemma \ref{lem:pinning} to the RHS of Inequality \eqref{eq:step1-bound}, we obtain:
\begin{equation}
    \mathbb{E}_{\sigma} \left[ \sum_{k=1}^K \mathrm{TC}(\X_{A_k} \mid \X_{\mathcal{O}_k}) \right] \leq \frac{1}{K} \phi(\X_{\mathcal{M}} \mid \x_{\mathcal{O}}).
    \label{eq:step2-bound}
\end{equation}

\textbf{Step 3: Bounding with Marginal Entropies} \\
From Proposition \ref{potential entropy bound}, we established that the potential function of the full set is upper-bounded by the sum of marginal conditional entropies:
\begin{equation*}
    \phi(\X_{\mathcal{M}} \mid \x_{\mathcal{O}}) \leq \sum_{i \in \mathcal{M}} \mathcal{H}(X_i \mid \x_{\mathcal{O}}).
\end{equation*}
Substituting this into Inequality \eqref{eq:step2-bound} yields the final result:
\begin{equation}
    \mathbb{E}_{\sigma} \left[ \sum_{k=1}^K \mathrm{TC}(\X_{A_k} \mid \X_{\mathcal{O}_k}) \right] \leq \frac{1}{K} \sum_{i \in \mathcal{M}} \mathcal{H}(X_i \mid \x_{\mathcal{O}}).
\end{equation}
\qed

\section{Improve Path LL Estimation with POKE}
\label{appendix:exp-pave-fidelity}

To verify the effectiveness of future LL estimates with POKE, At \textbf{each} intermediate diffusion step, we assess the quality of the future LL estimates produced by \textbf{(i) POKE} and \textbf{(ii) product LL} $\sum q_\theta$. For a given step, we run lookahead with different values of $K$ (equivalently, different $|\mathcal{A}|=|\mathcal{M}| / K$) and obtain the future LL estimates. We then add this future LL estimates to the cumulative LL at that step to form an estimate of the final joint LL. These estimated joint LLs are finally compared against the ground‑truth joint LL obtained after the full decoding process. 

\paragraph{Settings for \cref{fig:toy_experiment}} 
The experiment is conducted with LLaDA‑8B‑Instruct on a sample from the GSM8K dataset in a zero‑shot setting. Generation is run for 256 tokens over $T=256$
diffusion steps with a block size of 32. For each lookahead configuration, we draw 8 future trajectories in parallel and obtain the average product LL. We consider $K=8$ and $K=64$ as the maximum lookahead horizon. Throughout, we use entropy‑based remasking with a remasking temperature of 1.0, and a value sampling temperature of T=0.1.

\begin{figure*}[h]
\centering

\begin{subfigure}[t]{0.48\textwidth}
    \centering
    \includegraphics[width=\linewidth]{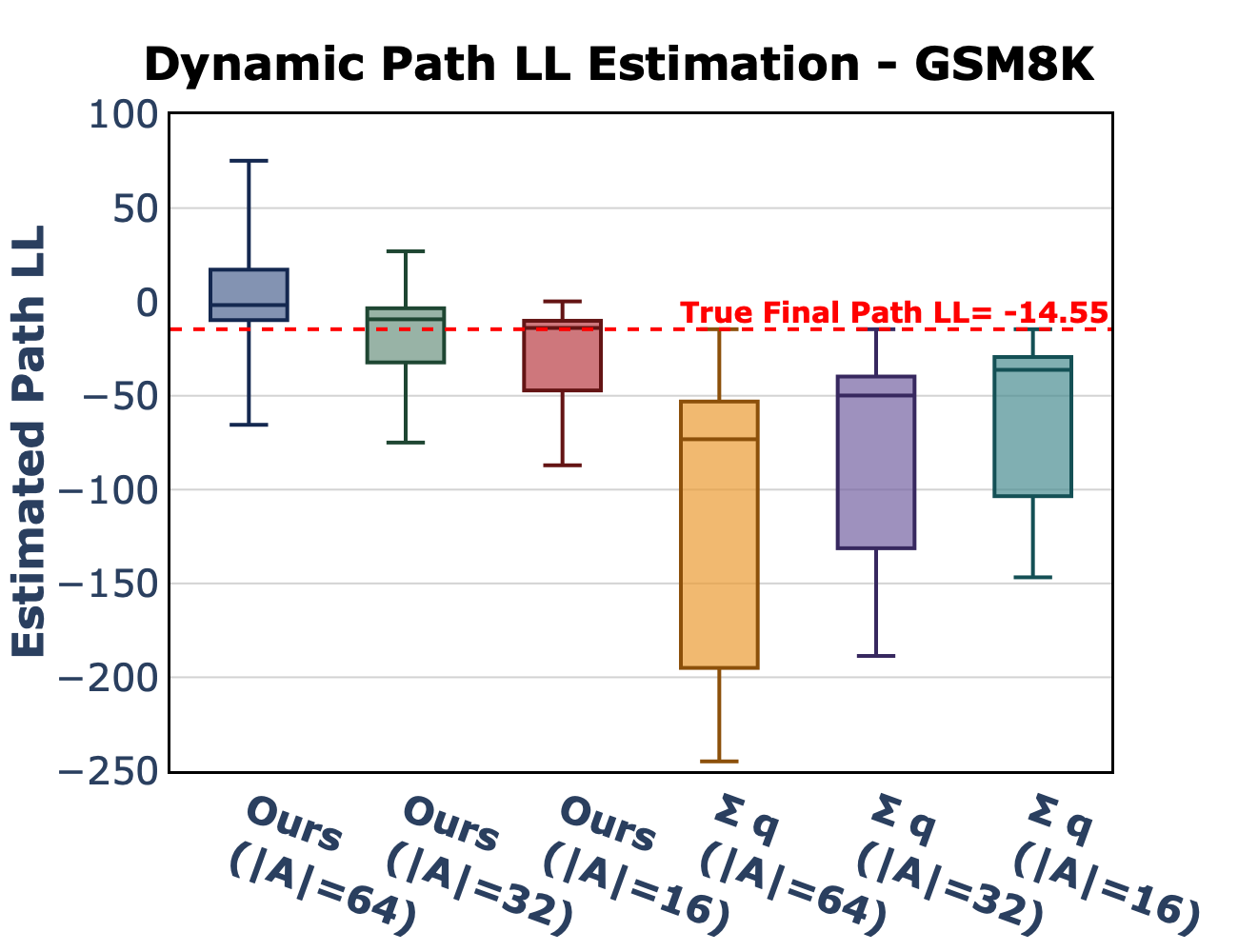}
    \caption{\textbf{Correct reasoning trajectory.}}
\end{subfigure}
\hfill
\begin{subfigure}[t]{0.48\textwidth}
    \centering
    \includegraphics[width=\linewidth]{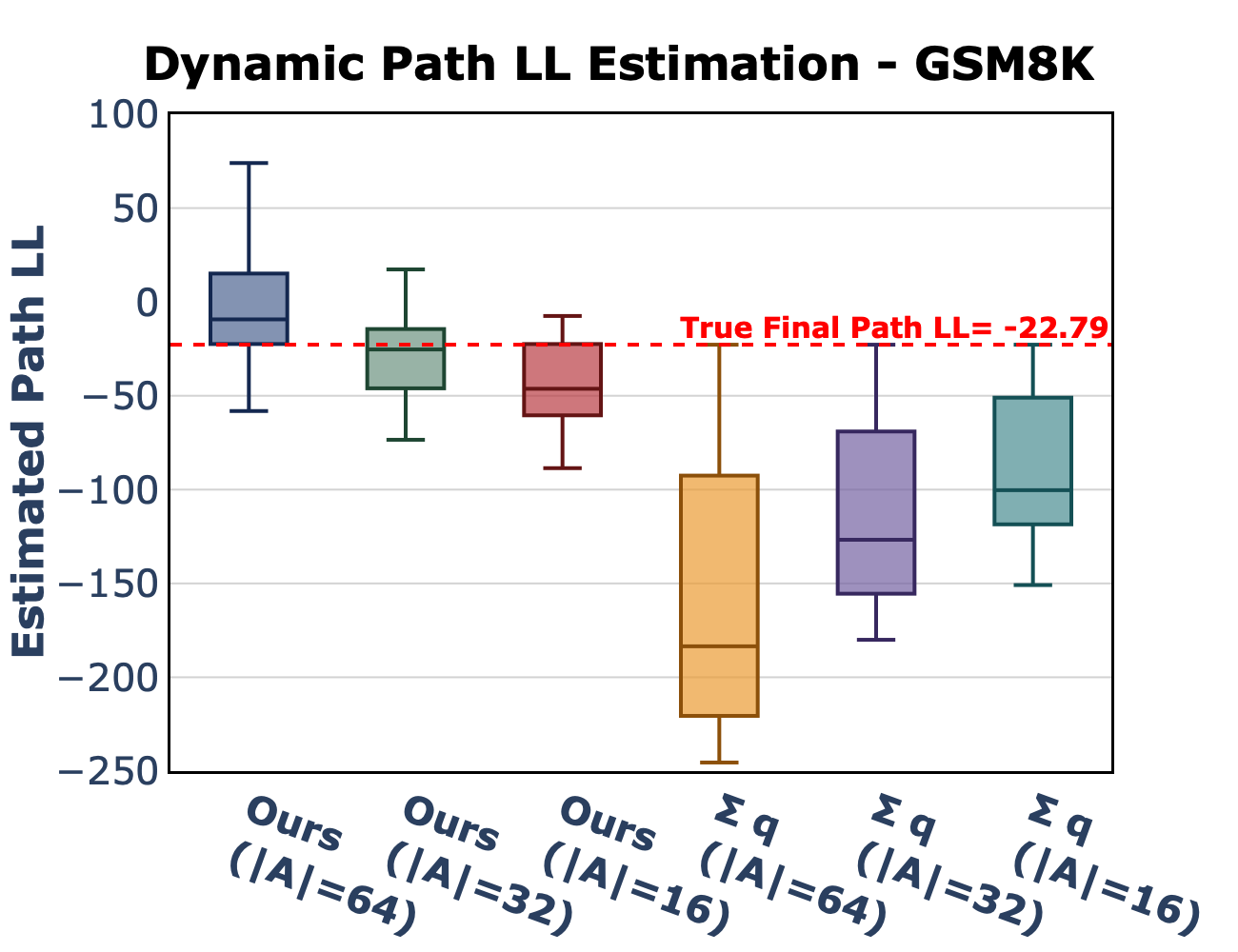}
    \caption{\textbf{Incorrect reasoning trajectory.}}
\end{subfigure}

\vspace{0.8em}

\small
\renewcommand{\arraystretch}{1.25}

\begin{tabular}{p{0.17\textwidth} p{0.80\textwidth}}
\toprule
\multicolumn{2}{l}{\textbf{Case: Mathematical Reasoning (GSM8K)}} \\
\midrule

\textbf{Prompt} &
Please answer the question step by step and put the answer in \textbackslash boxed\{\}.

Sunny is selling gingerbread and apple pie for a fundraiser.
On Saturday, he sold 10 boxes of gingerbread and 4 fewer boxes of apple pie than on Sunday.
On Sunday, he sold 5 more boxes of gingerbread than on Saturday and 15 boxes of apple pie.
If the gingerbread cost \$6 and the apple pie cost \$15, how much did Sunny earn for two days? \\

\midrule
\textbf{Generated (Correct)} &
To determine how much Sunny earned for two days, we need to calculate the earnings for each day and then sum them up.

First, let's calculate the earnings for Saturday:
- Sunny sold 10 boxes of gingerbread at $6$ per box: $10 \times 6 = 60$
- He sold 11 boxes of apple pie at $15$ per box: $11 \times 15 = 165$
- Total earnings for Saturday: $60 + 165 = 225$

Next, let's calculate the earnings for Sunday:
- Sunny sold 15 boxes of gingerbread at $6$ per box: $15 \times 6 = 90$ 
- He sold 15 boxes of apple pie at $15$ per box: $15 \times 15 = 225$
- Total earnings for Sunday: $90 + 225 = 315$

Finally, we sum the earnings for both days:
$225 + 315 = 540$
The answer is $\boxed{540}$ \\

\midrule
\textbf{Generated (Incorrect)} &
To determine how much Sunny earned for two days, we need to follow these steps:

1. Calculate the number of boxes of apple pie sold on Saturday.
2. Calculate the number of boxes of gingerbread sold on Sunday.
3. Calculate the earnings from gingerbread on both days.
4. Calculate the earnings from apple pie on both days.
5. Sum the earnings from both days.

1. On Saturday, Sunny sold 10 boxes of apple pie.
2. On Sunday, Sunny sold 15 boxes of apple pie.
3. On Saturday, Sunny sold 10 boxes of gingerbread. On Sunday, he sold $10 + 5 = 15$ boxes of gingerbread.
4. The earnings from gingerbread on both days is $10 \times 6 + 15 \times 6 = 60 + 90 = 150$.
5. The earnings from apple pie on both days is $15 \times 15 = 225$.
6. Therefore, the total earnings for the two days is $(150 + 225 = \boxed{375}$. \\

\bottomrule
\end{tabular}

\caption{
Comparison of incorrect and correct reasoning trajectories on a GSM8K example.
Despite identical prompts, premature commitment in the incorrect trajectory
leads to an erroneous final answer, whereas coherent reasoning yields the correct solution.
}
\label{fig:gsm8k_wrong_vs_correct}
\end{figure*}

\begin{figure*}[t]
\centering

\begin{subfigure}[t]{0.48\textwidth}
    \centering
    \includegraphics[width=\linewidth]{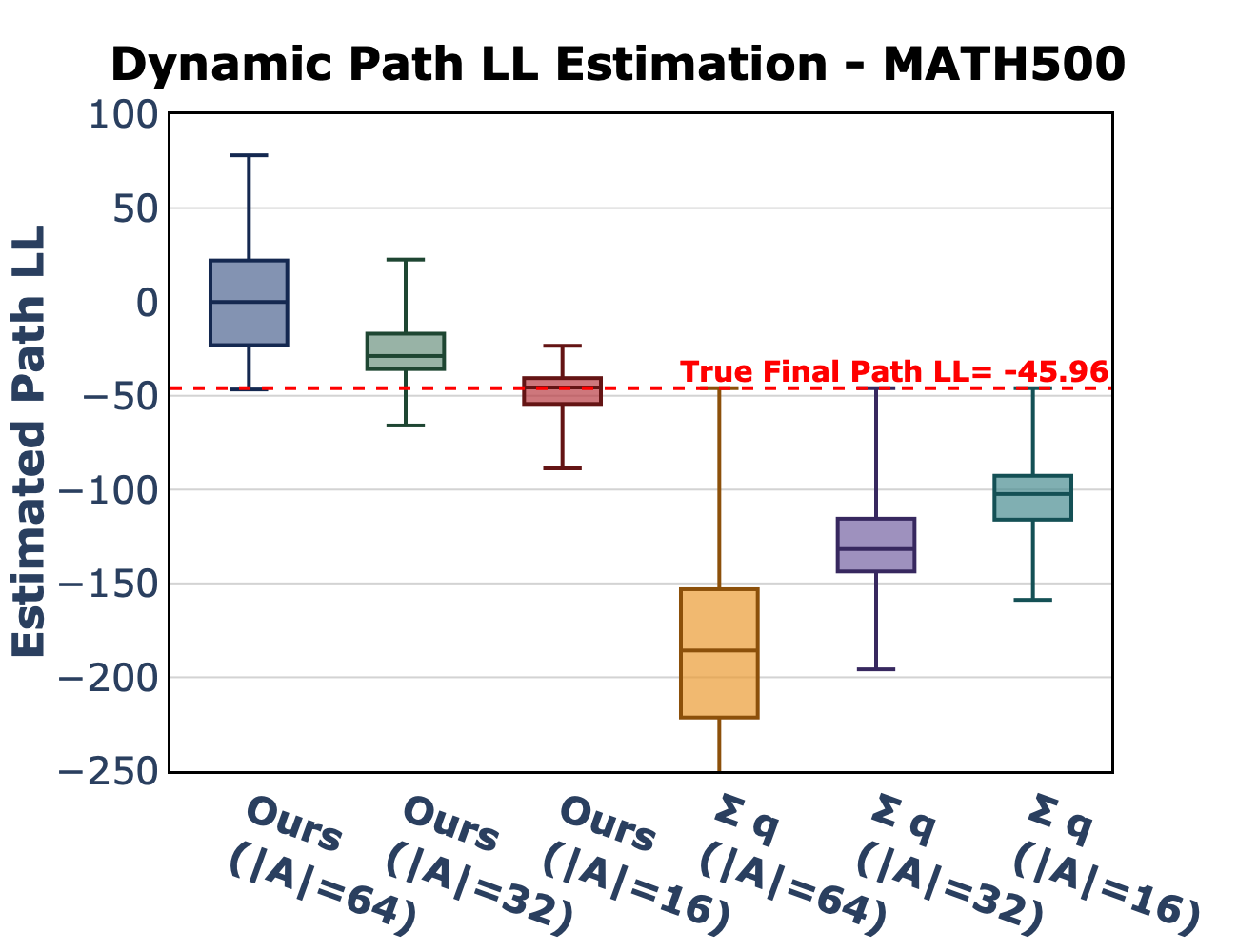}
    \caption{\textbf{Correct reasoning trajectory.}}
\end{subfigure}
\hfill
\begin{subfigure}[t]{0.48\textwidth}
    \centering
    \includegraphics[width=\linewidth]{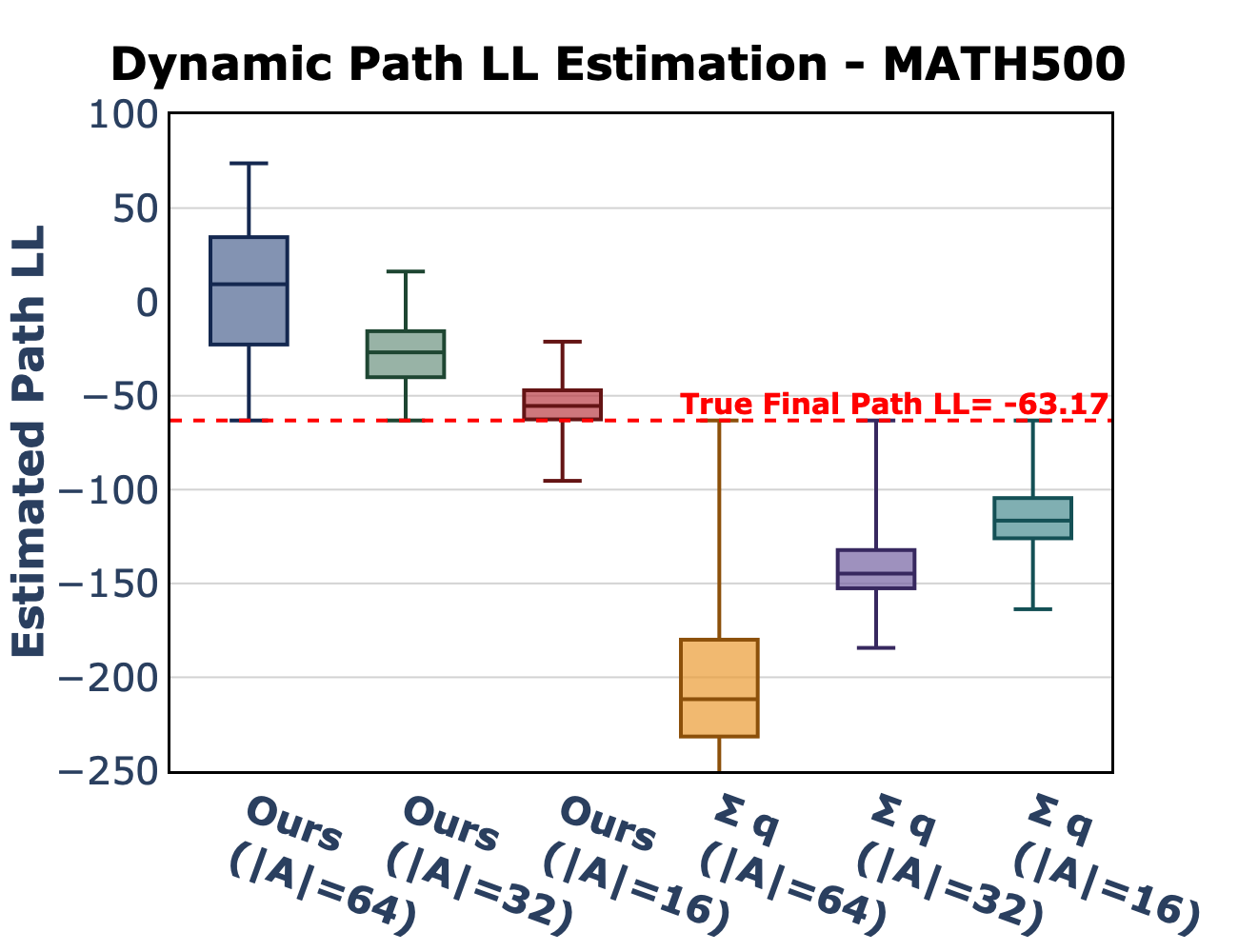}
    \caption{\textbf{Incorrect reasoning trajectory.}}
\end{subfigure}

\vspace{0.8em}

\small
\renewcommand{\arraystretch}{1.25}

\begin{tabular}{p{0.17\textwidth} p{0.80\textwidth}}
\toprule
\multicolumn{2}{l}{\textbf{Case: Mathematical Reasoning (MATH500)}} \\
\midrule

\textbf{Prompt} &
Please answer the question step by step and put the answer in \textbackslash boxed\{\}.

Consider the function 
$$
f(x) = \begin{cases} 
ax^2 & \text{if } x \geq a, \\
ax +2a & \text{if } x <a
\end{cases}
$$
where $a$ is some number. What is the largest value of $a$ such that the graph of $y=f(x)$ intersects every horizontal line at least once? \\

\midrule
\textbf{Generated (Correct)} &
1. To find the largest value of $ a $ such that the graph of $ y = f(x) $ intersects every horizontal line at least once, we need to analyze the behavior of the function $ f(x) $.

2. For $ x \geq a $, the function $ f(x) = ax^2 $. This is a parabola opening upwards, and it will intersect any horizontal line $ y = k $ if $ k \geq ax^2 $.

3. For $ x < a $, the function $ f(x) = ax + 2a $. This is a linear function, and it will intersect any horizontal line $ y = k $ if $ k \leq ax + 2a $.

4. To ensure that every horizontal line intersects the graph at least once, $ a $ must be such that $ ax^2 \leq ax + 2a $.

5. Simplifying the inequality, we get $ a \leq 2 $.

Therefore, the largest value of $ a $ such that the graph of $ y = f(x) $ intersects every horizontal line at least once is $\boxed{2}$ \\

\midrule
\textbf{Generated (Incorrect)} &
1. For $x \geq a$, the function $f(x) = ax^2$. 

2. For $ x < a $, the function $ f(x) = ax + 2a $.

To find the largest value of $ a$  such that the graph of $ y = f(x) $ intersects every horizontal line at least once, we need to consider the behavior of the function.

- For $ x \geq a $, $ f(x) = ax^2 $ is a parabola opening upwards.
- For $ x < a $, $ f(x) = ax + 2a $ is a linear function.

To intersect every horizontal line at least once, the function must be continuous and have a sufficient range to cover all real numbers.

The function $ f(x) $ is continuous for all $ x $.

Thus, the largest value of $ a $ that ensures the function intersects every horizontal line at least once is when $ a $ is maximized.

Therefore, the largest value of $ a $ such that the graph of $y = f(x)$ intersects every horizontal line at least once is \boxed{1}
\\
\bottomrule

\end{tabular}

\caption{
Comparison of incorrect and correct reasoning trajectories on a MATH500  example.
Despite identical prompts, premature commitment in the incorrect trajectory
leads to an erroneous final answer, whereas coherent reasoning yields the correct solution.
}
\label{fig:math500_wrong_vs_correct}
\end{figure*}

\begin{figure*}[t]
\centering
\begin{subfigure}[t]{0.48\textwidth}
    \centering
    \includegraphics[width=\linewidth]{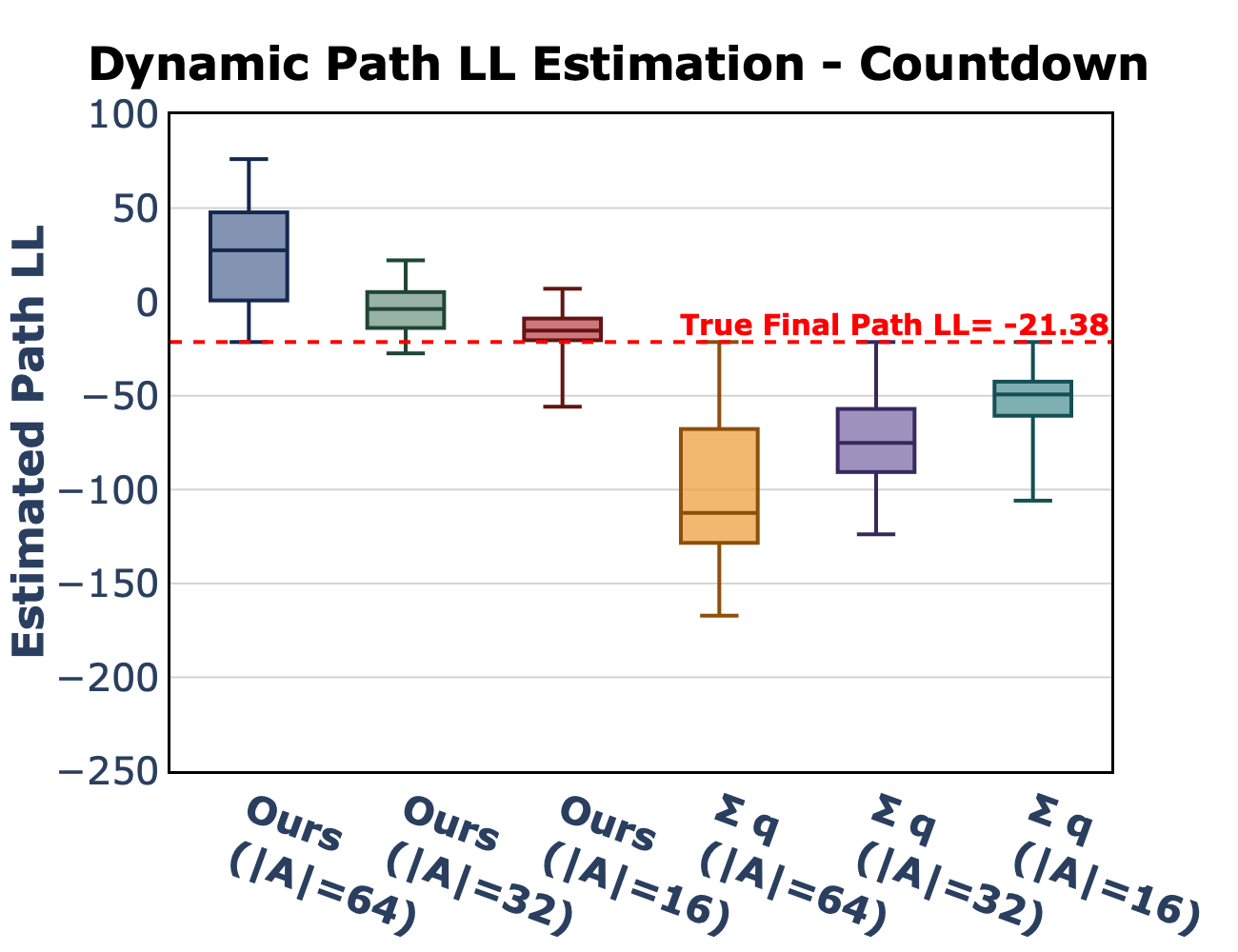}
    \caption{\textbf{Correct reasoning trajectory.}}
\end{subfigure}
\hfill
\begin{subfigure}[t]{0.48\textwidth}
    \centering
    \includegraphics[width=\linewidth]{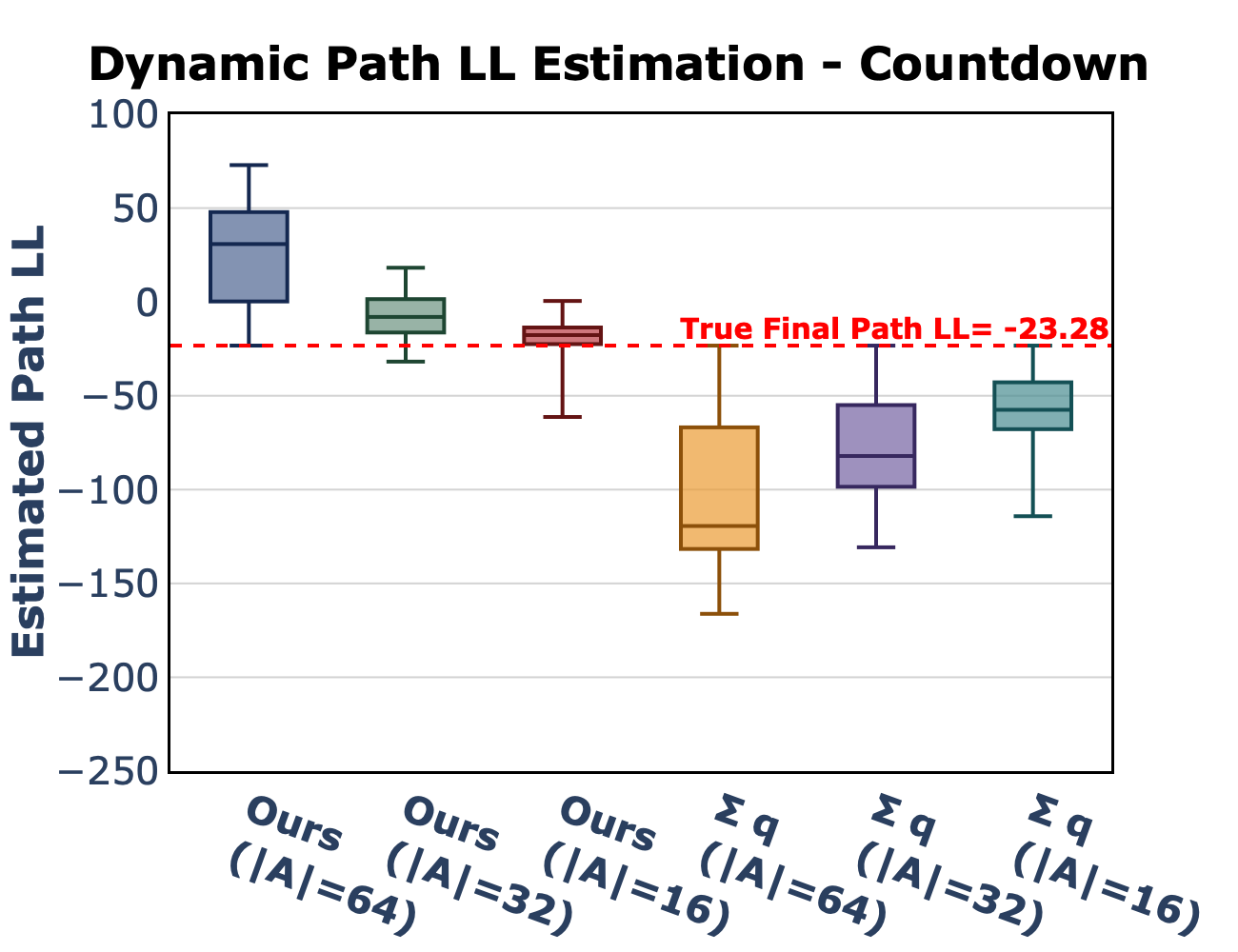}
    \caption{\textbf{Incorrect reasoning trajectory.}}
\end{subfigure}

\vspace{0.8em}
\small
\renewcommand{\arraystretch}{1.25}

\begin{tabular}{p{0.17\textwidth} p{0.80\textwidth}}
\toprule
\multicolumn{2}{l}{\textbf{Case: Arithmetic Reasoning (Countdown)}} \\
\midrule
\textbf{Prompt} &
For the given numbers, find a sequence of arithmetic operations 
that results in the target number. Show your reasoning and conclude 
with ``The answer is: [formula]''. 

\textbf{Examples:}

\textbf{Question: }15,44,79,50 

\textbf{Solution:} Let's try to combine 15 and 44. 44 - 15 = 29. Now we have 29 and the remaining number 79. We need to reach the target 50. Let's try 79 - 29 = 50.
This works. The answer is: 44-15=29,79-29=50  

\textbf{Question:} 1,2,12,25 

\textbf{Solution:} We have 1, 2, 12 and the target is 25. Let's try multiplying 2 and 12. 2 * 12 = 24. Now we have 24 and the remaining number 1. We need to reach 25. 24 + 1 = 25. This is correct. The answer is: 2*12=24,1+24=25 
 
\textbf{Question:} 3,85,5,30 

\textbf{Solution:} The numbers are 3, 85, 5 and the target is 30.Let's try adding 85 and 5. 85 + 5 = 90. Now we have 90 and the remaining number 3. We need to reach 30. 90 / 3 = 30. That's the target. The answer is: 85+5=90,90/3=30

\textbf{New Question:} 86,28,13,31,96 \\

\midrule
\textbf{Generated (Correct)} &
The numbers are 86, 28, 13, 31, and the target is 96.
Let's try adding 86 and 28. 86 + 28 = 114. Now we have 114 and the remaining numbers 13 and 31. We need to reach 96. Let's try adding 114 and 13. 114 + 13 = 127. Now we have 127 and the remaining number 31. We need to reach 96. Let's try adding 127 and 31. 127 + 31 = 158. Now we have 158 and the remaining number 96.
We need to reach 96. Let's try subtracting 62 from 158. 158 - 62 = 96. This is correct.

The answer is: 86+28=114,114+13=127,127+31-62=96 \\

\midrule
\textbf{Generated (Incorrect)} &
We have the numbers 86, 28, 13, 31, and the target is 96. Let's try adding 86 and 28. 86 + 28 = 114. Now we have 114 and the remaining numbers 13 and 31. 
We need to reach 96. Let's try adding 13 and 31. 13 + 31 = 44. Now we have 114 and the remaining number 44. 
We need to reach 96. Let's try subtracting 44 from 114. 114 - 44 = 70. Now we have 70 and the remaining number 26. We need to reach 96. Let's try adding 26 to 70. 70 + 26 = 96. This is correct. 

The answer is: 86+28=114,13+31=44,114-44=70,70+26 \\
\bottomrule
\end{tabular}
\caption{
Comparison of incorrect and correct reasoning trajectories on a Countdown
arithmetic reasoning example. Despite identical prompts, premature commitment
and inconsistent bookkeeping in the incorrect trajectory lead to a spurious
numerical prediction, whereas coherent multi-step reasoning yields the correct
target value.
}
\label{fig:countdown_wrong_vs_correct}
\end{figure*}

\paragraph{Further Experiments} To assess whether our findings generalize beyond a single dataset, we repeat the above experiment on GSM8K (using a different question), MATH500, and Countdown. We rerun the study with identical settings but finer-grained horizons $k\in\{4,8,16\}$, reporting results in Figures \ref{fig:gsm8k_wrong_vs_correct}, \ref{fig:math500_wrong_vs_correct} and \ref{fig:countdown_wrong_vs_correct}. Across all datasets, POKE consistently yields estimates that are closer to the true final path LL than the product LL baseline, with the improvement most pronounced for small $K=4$ and performs best at  $K=16$ (i.e., $|\mathcal{A}|=16$. This observation further justifies our choice for the main experiments. Moreover, the true joint LL associated with correct answers is typically higher than that of incorrect answers, suggesting that joint LL is a meaningful signal for distinguishing solution quality. 



\end{document}